\newtheorem{theorem}{Theorem}
\newtheorem*{theorem*}{Theorem}
\newtheorem{lemma}{Lemma}[section]
\begin{document}

\title{Riemannian Liquid Spatio-Temporal Graph Network}

\author{Liangsi Lu}
\authornote{Both authors contributed equally to this research.}
\email{lu.liangsi.cn@gmail.com}
\orcid{0009-0006-2839-3901}
\affiliation{%
  \institution{Guangdong University of Technology}
  \city{Guangzhou}
  \country{China}}

\author{Jingchao Wang}
\authornotemark[1]
\orcid{0000-0002-0099-539X}
\email{ethanwangjc@163.com}
\affiliation{%
  \institution{Peking University}
  \city{Beijing}
  \country{China}}

\author{Zhaorong Dai}
\orcid{0009-0008-5479-2580}
\email{13725167598@163.com}
\affiliation{%
  \institution{South China Agricultural University}
  \city{Guangzhou}
  \country{China}}

\author{Hanqian Liu}
\orcid{0009-0000-7666-192X}
\email{15899789962@163.com}
\affiliation{%
  \institution{Sun Yat-Sen University}
  \city{Zhuhai}
  \country{China}}

\author{Yang Shi}
\authornote{Yang Shi is the corresponding author.}
\orcid{0009-0009-3928-7495}
\email{sudo.shiyang@gmail.com}
\affiliation{%
  \institution{Guangdong University of Technology}
  \city{Guangzhou}
  \country{China}}

\renewcommand{\shortauthors}{Lu et al.}

\begin{abstract}
Liquid Time-Constant networks (LTCs), a type of continuous-time graph neural network, excel at modeling irregularly-sampled dynamics but are fundamentally confined to Euclidean space. This limitation introduces significant geometric distortion when representing real-world graphs with inherent non-Euclidean structures (e.g., hierarchies and cycles), degrading representation quality. To overcome this limitation, we introduce the \textbf{R}iemannian \textbf{L}iquid \textbf{S}patio-\textbf{T}emporal \textbf{G}raph Network (RLSTG), a framework that unifies continuous-time liquid dynamics with the geometric inductive biases of Riemannian manifolds. RLSTG models graph evolution through an Ordinary Differential Equation (ODE) formulated directly on a curved manifold, enabling it to faithfully capture the intrinsic geometry of both structurally static and dynamic spatio-temporal graphs. Moreover, we provide rigorous theoretical guarantees for RLSTG, extending stability theorems of LTCs to the Riemannian domain and quantifying its expressive power via state trajectory analysis. Extensive experiments on real-world benchmarks demonstrate that, by combining advanced temporal dynamics with a Riemannian spatial representation, RLSTG achieves superior performance on graphs with complex structures. Project Page: \href{https://rlstg.github.io}{https://rlstg.github.io}
\end{abstract}

\begin{CCSXML}
<ccs2012>
   <concept>
       <concept_id>10010147.10010178</concept_id>
       <concept_desc>Computing methodologies~Artificial intelligence</concept_desc>
       <concept_significance>500</concept_significance>
       </concept>
   <concept>
       <concept_id>10002951.10003227</concept_id>
       <concept_desc>Information systems~Information systems applications</concept_desc>
       <concept_significance>500</concept_significance>
       </concept>
 </ccs2012>
\end{CCSXML}

\ccsdesc[500]{Computing methodologies~Artificial intelligence}
\ccsdesc[500]{Information systems~Information systems applications}

\keywords{Riemannian Manifolds,
Neural ODEs,
Spatio-Temporal Graphs
}


\maketitle

\begin{figure}[!t]
    \centering
    \includegraphics[width=\linewidth]{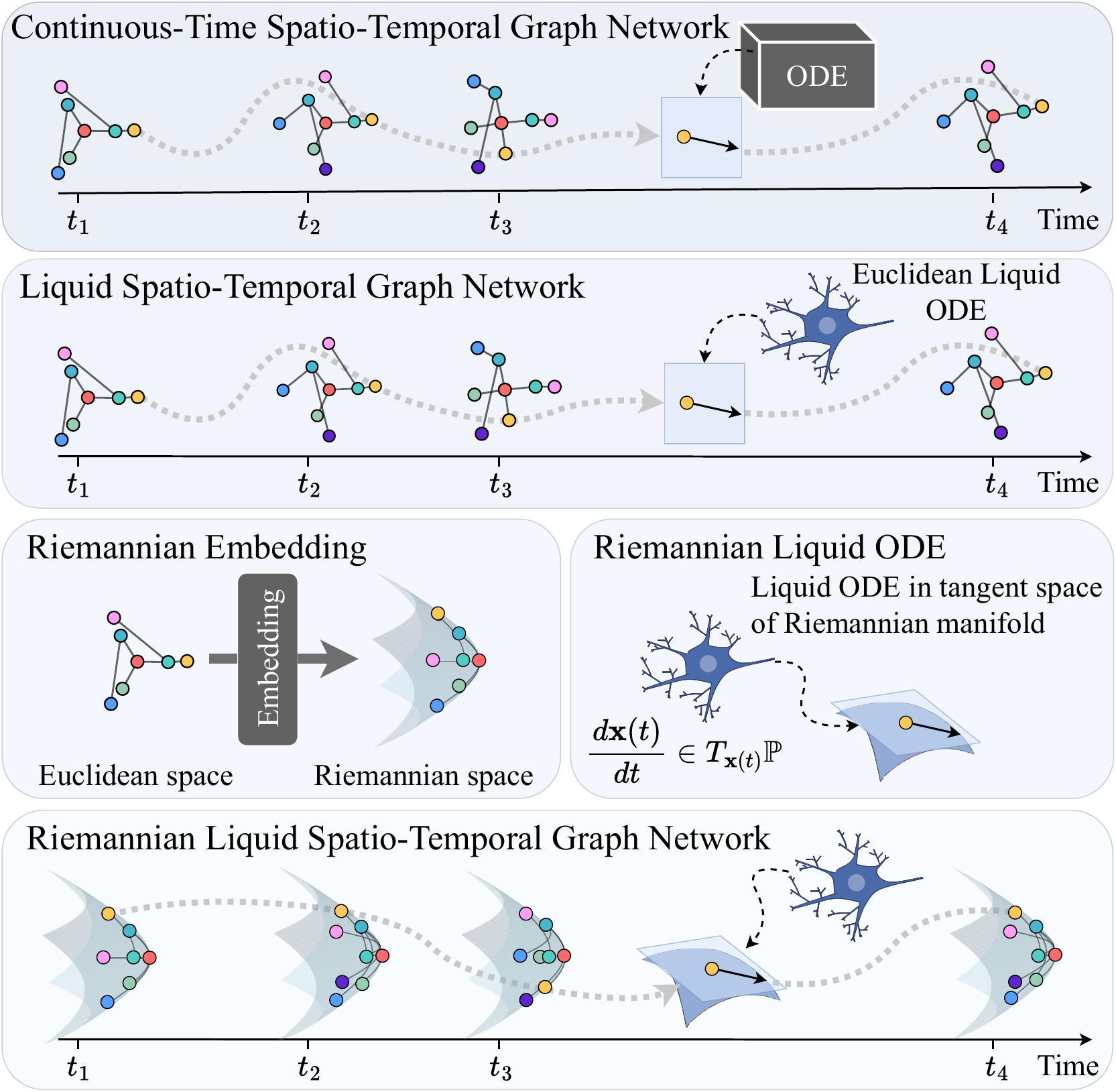}
    \caption{Euclidean continuous-time models, including Liquid networks, distort non-Euclidean graph structures. RLSTG reduces distortion by embedding graphs onto suitable Riemannian manifolds. The system's state evolves on this curved space via a Liquid ODE formulated in the manifold's tangent spaces, enabling a more faithful representation.}
    \label{fig:intro}
    \vspace{-7pt}
\end{figure}

\section{Introduction}

Many real-world dynamical systems evolve in continuous time, with observations often collected at irregular intervals \cite{tgn_icml_grl2020}. For instance, user interactions in social networks occur at sporadic time points, and sensor failures in traffic monitoring can lead to sparse data \cite{gravina2024tgode}. Continuous-time models (e.g. CT-RNNs \cite{funahashi1993approximation}, Neural ODE \cite{chen2018neuralode}) are inherently well-suited for handling such irregularly and sparsely sampled time-series data \cite{rubanova2019latent, kidger2020neural}. Liquid Time-Constant networks (LTCs), an advanced class of continuous-time neural ODE models \cite{hasani2021liquid, hasani2022closed} that have been successfully extended to graphs \cite{marino2024liquid}. Owing to their adaptive gating mechanisms and learnable time constants, LTCs have shown exceptional stability and expressivity in processing such data.

Despite their success in capturing complex temporal patterns, existing continuous-time graph models share a fundamental limitation: they operate exclusively within Euclidean space \cite{tgn_icml_grl2020, gravina2024tgode}. This assumption is problematic, as many real-world graphs are inherently non-Euclidean \cite{nickel2017poincare}. For example, social networks and citation graphs often exhibit tree-like hierarchies, while transportation networks and molecular graphs contain numerous cycles \cite{chami2019hyperbolic, mostafa2022hyperbolic}. Embedding these structures into the flat geometry of Euclidean space inevitably causes significant distortion, compressing hierarchies and stretching cycles, which degrades the quality of the learned representations and harms the performance of downstream tasks.

To address this geometric distortion, the field of geometric deep learning has proposed modeling graph data on Riemannian manifolds, which offer a richer geometric representation \cite{ganea2018hyperbolic}. Hyperbolic manifolds, with their negative curvature, provide an ideal space for embedding hierarchical structures with minimal distortion, while spherical manifolds are naturally suited for capturing cyclical patterns \cite{nickel2017poincare, chami2019hyperbolic}. However, existing Riemannian Graph Neural Networks (GNNs) have predominantly focused on static graphs or employed discrete-time evolution mechanisms \cite{yang2021discrete, bai2023hgwavenet}, the challenge of continuous-time dynamics on manifolds has remained largely unexplored.

To bridge this gap, we introduce the Riemannian Liquid Spatio-Temporal Graph Network (RLSTG), a model to unify LTCs with Riemannian geometry for spatio-temporal graphs. We design RLSTG on the tangent spaces of the manifold, using mappings defined within the context of Riemannian geometry. As conceptualized in Figure~\ref{fig:intro}, our approach is uniquely equipped to handle the dual challenges of irregular temporal sampling and complex spatial geometry.

Our main contributions are summarized as follows:
\begin{itemize}
    \item We develop a specialized, provably convergent ODE solver for stiff dynamics on manifolds, extend the theoretical guarantees of stability and universal approximation for LTCs to the Riemannian domain and provide expressivity analyses.
    \item We propose RLSTG, a framework integrates liquid continuous-time dynamics with Riemannian geometry for spatio-temporal graph representation learning.
    \item We conduct extensive experiments on real-world benchmarks, demonstrating that RLSTG is significantly superior in irregularly sampled data and graphs with non-Euclidean structures.
\end{itemize}

\section{Related Work}

Our work integrates three research pillars: continuous-time neural models, spatio-temporal graph networks, and geometric deep learning.

\subsubsection{Continuous-Time and Spatio-Temporal Models}
Continuous-time model \cite{funahashi1993approximation, chen2018neuralode, rubanova2019latent, kidger2020neural}, especially Liquid Time-Constant networks \cite{hasani2021liquid, hasaniliquid} established a foundation for modeling continuous, irregularly sampled dynamics.  
However, these models and most continuous-time graph networks are fundamentally Euclidean \cite{fang2021spatial, gravina2024tgode, gravina2024ctan, wan2025rethink}.  This limitation is also shared by dominant discrete-time models, which are further constrained by fixed time intervals ~\cite{li2018dcrnn_traffic, guo2019attention}. RLSTG overcomes these dual constraints by formulating a stable, liquid continuous-time model that evolves directly on a geometrically appropriate manifold.

\subsubsection{Geometric Graph Learning}
Geometric deep learning mitigates the metric distortion of embedding non-Euclidean graphs into flat space. Foundational work on non-Euclidean GNNs \cite{nickel2017poincare, ganea2018hyperbolic, chami2019hyperbolic, gu2018learning, bachmann2020constant} demonstrated superior performance for static non-Euclidean data. Recently, these geometric concepts have been extended to dynamic graphs \cite{bai2023hgwavenet, yang2021discrete}, leveraging hyperbolic space for temporal link prediction and long-term dynamics. Nevertheless, these advanced geometric approaches still predominantly rely on discrete-time update mechanisms. RLSTG bridges this gap by unifying LTCs with Riemannian geometry, enabling a continuous-time dynamical system on a curved space.

\section{Preliminaries}

\subsubsection{Riemannian Geometry Basics.}
Riemannian geometry provides a rigorous framework for analyzing smooth manifolds endowed with smoothly varying inner products on their tangent spaces. A Riemannian manifold \((\mathcal{M}, g)\) consists of a smooth manifold \(\mathcal{M}\) together with an inner product \(g_{\mathbf{x}}(\cdot,\cdot)\) on each tangent space \(T_{\mathbf{x}}\mathcal{M}\) that depends smoothly on \(\mathbf{x}\); since every \(T_{\mathbf{x}}\mathcal{M}\) is an inner-product space, one can define angles, lengths, and geodesics, where a geodesic is the shortest smooth curve joining two points \(\mathbf{x},\mathbf{y}\in\mathcal{M}\). Canonical examples include constant-curvature spaces such as hyperbolic space \(\mathbb{H}^{n}\), spherical space \(\mathbb{S}^{n}\), and Euclidean space \(\mathbb{E}^{n}\) (detailed in Appendix~\ref{Basic_Geo}). More complex geometries arise as Cartesian products of simpler factors, yielding a product manifold that supports heterogeneous structure (for example, \(\mathbb{H}^{n}\times\mathbb{S}^{m}\)). Given smooth manifolds \(\mathcal{M}_{1},\ldots,\mathcal{M}_{k}\), the product manifold is
\begin{equation}
\mathbb{P}=\mathcal{M}_{1}\times\mathcal{M}_{2}\times\cdots\times\mathcal{M}_{k}.
\end{equation}
A point \(\mathbf{x}\in\mathbb{P}\) is written \(\mathbf{x}=[x_{1},\ldots,x_{k}]\) with \(x_{i}\in\mathcal{M}_{i}\), and the tangent space at \(\mathbf{x}\) splits as a direct sum, reflecting the component-wise differential structure:
\begin{equation}
T_{\mathbf{x}}\mathbb{P}\cong T_{x_{1}}\mathcal{M}_{1}\oplus\cdots\oplus T_{x_{k}}\mathcal{M}_{k}.
\end{equation}
Accordingly, any tangent vector \(\mathbf{v}\in T_{\mathbf{x}}\mathbb{P}\) decomposes as \(\mathbf{v}=[v_{1},\ldots,v_{k}]\) with \(v_{i}\in T_{x_{i}}\mathcal{M}_{i}\). If each factor \(\mathcal{M}_{i}\) carries a distinguished origin \(o_{i}\in\mathcal{M}_{i}\), the natural origin of the product is
\begin{equation}
\mathbf{o}=[o_{1},o_{2},\ldots,o_{k}]\in\mathbb{P}.
\end{equation}
For product spaces we adopt a decoupled geodesic distance that aggregates the factor-wise distances,
\begin{equation}
d_{\mathbb{P}}(\mathbf{x},\mathbf{y})=\sum_{i=1}^{k} d_{\mathcal{M}_{i}}(x_{i},y_{i}),
\end{equation}
where \(d_{\mathcal{M}_{i}}\) denotes the geodesic distance on \(\mathcal{M}_{i}\). Two fundamental operators mediate between points on a manifold and vectors in its tangent spaces: the exponential map \(\exp_{\mathbf{x}}:T_{\mathbf{x}}\mathcal{M}\to\mathcal{M}\) and the logarithmic map \(\log_{\mathbf{x}}:\mathcal{M}\to T_{\mathbf{x}}\mathcal{M}\). On \(\mathbb{P}\) these operators act component-wise, implementing manifold-aware moves by independent factor updates:
\begin{equation}
\mathrm{Exp}_{\mathbf{x}}(\mathbf{v})=\bigl[\exp^{K_{1}}_{x_{1}}(v_{1}),\;\exp^{K_{2}}_{x_{2}}(v_{2}),\;\ldots,\;\exp^{K_{k}}_{x_{k}}(v_{k})\bigr],
\end{equation}
\begin{equation}
\mathrm{Log}_{\mathbf{x}}(\mathbf{y})=\bigl[\log^{K_{1}}_{x_{1}}(y_{1}),\;\log^{K_{2}}_{x_{2}}(y_{2}),\;\ldots,\;\log^{K_{k}}_{x_{k}}(y_{k})\bigr].
\end{equation}
The orthogonal projection onto the product tangent space decomposes factorwise:
\begin{equation}
\mathrm{Proj}_{T_{\mathbf{x}}\mathbb{P}}(\mathbf{u})
=
\bigl[\mathrm{Proj}_{T_{x_{1}}\mathcal{M}_{1}}(u_{1}),\mathrm{Proj}_{T_{x_{2}}\mathcal{M}_{2}}(u_{2}),\ldots,\mathrm{Proj}_{T_{x_{k}}\mathcal{M}_{k}}(u_{k})\bigr].
\end{equation}
Parallel transport also decomposes by factors: if \(\gamma(t)=[\gamma_{1}(t),\ldots,\) \(\gamma_{k}(t)]\) is a smooth curve in \(\mathbb{P}\) and \(\mathbf{v}(t)=[v_{1}(t),\ldots,v_{k}(t)]\) is parallel along \(\gamma(t)\), then each \(v_{i}(t)\) is parallel along \(\gamma_{i}(t)\) in \(\mathcal{M}_{i}\); denoting the parallel transport in \(\mathcal{M}_{i}\) from \(\gamma_{i}(0)\) to \(\gamma_{i}(1)\) by \(P^{\mathcal{M}_{i}}_{\gamma_{i}(0)\to\gamma_{i}(1)}\), the product transport is given by
\begin{equation}
P^{\mathbb{P}}_{\gamma(0)\to\gamma(1)}\,[v_{1},\ldots,v_{k}]
=
\bigl[P^{\mathcal{M}_{1}}_{\gamma_{1}(0)\to\gamma_{1}(1)} v_{1},\;\ldots,\;P^{\mathcal{M}_{k}}_{\gamma_{k}(0)\to\gamma_{k}(1)} v_{k}\bigr],
\end{equation}
and these operators together enable seamless transitions between points and tangent directions, which are essential for defining and integrating Riemannian ordinary differential equations on product geometries.

\subsubsection{Graph and Aggregation on Manifolds.}
We model a continuous-time spatio-temporal graph as $G(t) = (\mathcal{V}(t), \mathcal{E}(t), \mathbf{N}(t), \mathbf{E}(t))$, where $\mathcal{V}(t)=\{v_1(t),\ldots,v_{|\mathcal{V}|}(t)\}$ is the set of vertices, $\mathcal{E}(t) \subseteq \mathcal{V}(t) \times \mathcal{V}(t)$ is the set of edges, $\mathbf{N}(t) \in \mathbb{R}^{|\mathcal{V}| \times d_v}$ is the node feature matrix, and $\mathbf{E}(t) \in \mathbb{R}^{|\mathcal{E}| \times d_e}$ is the edge feature matrix. To capture the data's intrinsic geometry, we embed these features onto a Riemannian manifold $\mathbb{P}$. Each node $v_i$ is represented by a learnable point $\mathbf{x}_i \in \mathbb{P}$, and each edge $(i,j)$ by a point $\mathbf{a}_{ij} \in \mathbb{P}$, mapped from their initial features via the exponential map:
\begin{equation}\label{eq:node_encoder}
    \mathbf{x}_i = \phi_v(\mathbf{n}_i) = \mathrm{Exp}_{\mathbf{o}}(W_v \mathbf{n}_i + \mathbf{b}_v),
\end{equation}
\begin{equation}\label{eq:edge_encoder}
    \mathbf{a}_{ij} = \psi_e(\mathbf{e}_{ij}) = \mathrm{Exp}_{\mathbf{o}}(W_e \mathbf{e}_{ij} + \mathbf{b}_e).
\end{equation}

Node representations are updated via a message-passing scheme adapted for Riemannian manifolds. The process is conducted in the tangent space at the origin $\mathbf{o}$ for computational efficiency and geometric consistency. For each neighboring node $j \in \mathcal{N}(i)$, a message vector $\mathbf{m}_{ij}$ is computed. This message is a function of the source node, target node, and their connecting edge, all projected into the tangent space:
\begin{equation}\label{eq:message_creation}
\mathbf{m}_{ij} = \sigma\Big(\mathrm{MLP}\Big(\big[\mathrm{Log}_{\mathbf{o}}(\mathbf{x}_{i}), \mathrm{Log}_{\mathbf{o}}(\mathbf{x}_{j}),\mathrm{Log}_{\mathbf{o}}(\mathbf{a}_{ij})\big]\Big)\Big),
\end{equation}
where $\sigma$ is a non-linear activation function (e.g. tanh). These message vectors are aggregated via summation. The resulting vector, which may not lie in the tangent space, is projected back to ensure geometric validity. Finally, this projected vector is mapped from the tangent space to the manifold to produce the updated representation for node $i$:
\begin{equation}\label{eq:aggregation}
\mathrm{AGG}(\mathbf{x})_i = \mathrm{Exp}_{\mathbf{o}}\left( \mathrm{Proj}_{T_{\mathbf{o}}\mathbb{P}}\left(\sum_{j \in \mathcal{N}(i)} \mathbf{m}_{ij} \right)\right).
\end{equation}

This message-passing mechanism allows for a rich, geometrically sound fusion of neighborhood information, forming the basis for our Riemannian graph differential equation.

\subsubsection{Liquid Time-Constant Networks.}
\textit{Liquid time-constant networks} (LTCs) are a class of continuous-time RNNs with superior expressivity and stability \cite{hasani2021liquid}. Instead of an unstructured derivative, LTCs use an ODE whose parameters are modulated by an adaptive gating mechanism. The dynamics of a hidden state $\mathbf{x}^{E}(t)\in \mathbb{E}^n$ are governed by:
\begin{equation}\label{eq:ltc}
\frac{d\mathbf{x}^{E}(t)}{dt} = -\frac{1}{\boldsymbol{\widetilde{\tau}}_\text{sys}} \odot \mathbf{x}^{E}(t) + \widetilde{\mathbf{f}}(\mathbf{x}^{E}(t),\mathbf{I}(t),t,\theta) \odot \mathbf{V}^E,
\end{equation}
where the driving input $\mathbf{V}$ is learnable and the key innovation is the \textit{liquid time-constant} $\boldsymbol{\widetilde{\tau}}_\text{sys}$, which changes dynamically based on the current state and input via a gating function $\widetilde{\mathbf{f}}(\cdot)$:
\begin{equation}
\boldsymbol{\widetilde{\tau}}_\text{sys} =  \frac{\boldsymbol{\tau}}{1+\boldsymbol{\tau}\odot \widetilde{\mathbf{f}}(\mathbf{x}^{E}(t),\mathbf{I}(t),t,\theta)}.
\end{equation}

The operator $\odot$ denotes the Hadamard product (element-wise multiplication), and all divisions are also performed element-wise. This state-dependent modulation allows LTCs to learn flexible temporal patterns while maintaining provably stable, bounded hidden states. However, this formulation is fundamentally Euclidean; operations like vector addition and scaling are ill-defined on curved manifolds. This necessitates a new formulation of liquid dynamics that respects Riemannian geometry, which we introduce next.

\section{Riemannian Liquid Spatio-Temporal Graph Network}
RLSTG is a continuous-time graph neural network that evolves node features on a Riemannian manifold via a liquid time-constant ODE. We describe the model’s state representation, ODE formulation, and ODE solver. We summarize key theoretical properties: the stability of the system and the advantage on expressive power.

\subsection{Riemannian Liquid ODE Formulation}
We generalize the principles of Liquid Time-Constant networks (LTCs) to evolve node states on a Riemannian manifold $\mathbb{P}$. The standard LTCs governs a state $\mathbf{x}^E$ in Euclidean space $\mathbb{E}$ via a leak term $(-\mathbf{x}^E)$ and a driving input $(\mathbf{V})$. We establish a direct analogy on the manifold. The state's decay towards the origin $\mathbf{o}$ is captured by the logarithmic map $\mathrm{Log}_{\mathbf{x}}(\mathbf{o}) \in T_{\mathbf{x}}\mathbb{P}$, a tangent vector at the current state $\mathbf{x}$ pointing towards $\mathbf{o}$. The driving input, represented as a vector $\mathbf{V} \in T_{\mathbf{o}}\mathbb{P}$, is translated to the state's local frame via parallel transport, yielding $P_{\mathbf{o}\to\mathbf{x}}(\mathbf{V}) \in T_{\mathbf{x}}\mathbb{P}$. 

These substitutions yield the Riemannian Liquid Spatio-Temporal Graph (RLSTG) ODE. The dynamics for a node state $\mathbf{x}(t) \in \mathbb{P}$ are defined entirely within its tangent space $T_{\mathbf{x}(t)}\mathbb{P}$:
\begin{equation}\label{eq:rlstg_ode}
\begin{aligned}
\frac{d \mathbf{x}(t)}{dt}=
\mathrm{Proj}_{T_{\mathbf{x}(t)}\mathbb{P}}
& \left( \left( \frac{1}{\boldsymbol{\tau}}+f(\cdot) \right) \odot\mathrm{Log}_{\mathbf{x}(t)}\mathbf{o}+ \right.\left.f(\cdot)\odot P_{\mathbf{o}\rightarrow \mathbf{x}(t)}(\mathbf{V}) \right).
\end{aligned}
\end{equation}

The dynamics are modulated by a state-dependent gating function $f(\cdot)$, which is computed for each node $i$ as:
\begin{equation}
\label{eq:gating_f}
\begin{aligned}
f(\mathbf{x}(t),\mathbf{N}(t),\mathbf{E}(t),t,\theta)_i
=&\sigma(
 W_{\mathbf{x}}\mathrm{Log}_{\mathbf{o}}(\mathrm{AGG}(\mathbf{x}(t))_i)+\\
 &W_{\mathbf{I}}\phi_v(\mathbf{N}(t)_i)
+b),
\end{aligned}
\end{equation}
where $\mathrm{AGG}(\cdot)$ performs geometric aggregation of neighboring node states as defined in Eq.~\eqref{eq:aggregation}, $\phi_v(\cdot)$ is an encoder defined in Eq.~\eqref{eq:node_encoder} and $\sigma$ is a non-linear activation function. The function $f(\cdot)$ adaptively controls the system's behavior: it simultaneously modulates the rate of decay towards the stable equilibrium point $\mathbf{o}$ and scales the influence of a learnable driving input vector $\mathbf{V}$. This dual role allows the system to learn complex temporal dynamics, balancing stability with responsiveness to new inputs.

\subsection{Geodesic Decay ODE Solver}
Numerically integrating Eq.~\eqref{eq:rlstg_ode} is challenging due to stiffness caused by the adaptive decay term~\cite{hairer2006structure}. Standard explicit solvers require prohibitively small step sizes for stability, while implicit methods are too computationally expensive on manifolds.

To overcome this, we introduce the \textit{Geodesic Decay (GD) ODE solver}, a specialized integrator based on operator splitting~\cite{stern2009implicit, leimkuhler2004simulating}. This technique decomposes the ODE's vector field into its non-stiff (driving) and stiff (decay) components. We handle the stiff decay analytically for stability and integrate the non-stiff driving term efficiently, yielding a robust solution that avoids the limitations of purely explicit or implicit methods. 

The solver update, illustrated in Figure~\ref{fig:gd}, is a two-stage process. The driving step advances the state to an intermediate point $\mathbf{x}^{*}$:

\begin{equation}
\mathbf{x}^{*} = \mathrm{Exp}_{\mathbf{x}(t)}\Big(\Delta t \cdot \left(f(\mathbf{x}(t),\mathbf{N}(t),\mathbf{E}(t),t,\theta) \odot P_{\mathbf{o}\rightarrow \mathbf{x}_t}(\mathbf{V})\right)\Big).
\end{equation}

The GD step pulls $\mathbf{x}^{*}$ towards the origin using a state-dependent decay factor $\alpha$ to yield the final state:

\begin{equation}
    \alpha = e^{-\Delta t \cdot \left(\frac{1}{\boldsymbol{\tau}} + f(\mathbf{x}(t),\mathbf{N}(t),\mathbf{E}(t),t,\theta)\right)},
\end{equation}

\begin{equation}\label{eq:gd_solver}
\mathbf{x}(t+\Delta t) = \mathrm{Exp}_{\mathbf{o}} \Big(\alpha \odot 
\mathrm{Log}_{\mathbf{o}}( \mathbf{x}^{*}) \Big).
\end{equation}

The complete forward propagation for a discrete time step $\Delta t_k$ from state $\mathbf{x}_k$ is thus:
\begin{equation}\label{eq:gd_solver_d}
\begin{aligned}
\mathbf{x}_{k+1}
={}&
\operatorname{Exp}_{\mathbf{o}}\Bigl(
  e^{-\Delta t_k\bigl[\frac{1}{\boldsymbol{\tau}}+f(\cdot)\bigr]}
  \;\odot\\
&\operatorname{Log}_{\mathbf{o}}\Bigl(
    \operatorname{Exp}_{\mathbf{x}_k}\bigl(
        \Delta t_k\,(\,f(\cdot)\odot P_{\mathbf{o}\to\mathbf{x}_k}(\mathbf{V})\,)
    \bigr)
  \Bigr)
\Bigr),
\end{aligned}
\end{equation}
where $f(\cdot)=f(\mathbf{x}(t),\mathbf{N}(t),\mathbf{E}(t),t,\theta)$.

\begin{theorem}
Let the manifold be a product manifold $\mathbb{P}$. Consider the ordinary differential equation on $\mathbb{P}$ given by Eq.~\eqref{eq:rlstg_ode}. The GD solver defined by Eq.~\eqref{eq:gd_solver_d} is a first-order convergent method, where the global error $E_{\text{global}}$ over a finite time interval $[0, T]$ is bounded by:
\begin{equation}
E_{\text{global}} \approx O(\Delta t).
\end{equation}
\end{theorem}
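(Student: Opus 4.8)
The plan is to establish first-order convergence via the standard local-truncation-error-plus-stability argument, adapted to the Riemannian setting through operator splitting theory. The global error bound $E_{\text{global}} \approx O(\Delta t)$ follows from showing the local truncation error is $O(\Delta t^2)$ per step, then propagating over $T/\Delta t$ steps while controlling error amplification via a discrete Gr\"onwall inequality. Because the product manifold $\mathbb{P}$ decomposes all relevant operators factor-wise (the exponential, logarithmic, projection, and parallel-transport maps all act componentwise, as established in the Preliminaries), it suffices to prove the bound on each constant-curvature factor $\mathcal{M}_i$ and aggregate; the decoupled distance $d_{\mathbb{P}} = \sum_i d_{\mathcal{M}_i}$ turns the per-factor $O(\Delta t)$ bounds into a total $O(\Delta t)$ bound.

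First I would analyze the local truncation error of a single GD step. The GD solver is a Lie--Trotter splitting of the vector field in Eq.~\eqref{eq:rlstg_ode} into the driving term $f(\cdot)\odot P_{\mathbf{o}\to\mathbf{x}}(\mathbf{V})$ and the decay term $(\tfrac{1}{\boldsymbol{\tau}}+f(\cdot))\odot\mathrm{Log}_{\mathbf{x}}\mathbf{o}$. I would invoke the classical result that first-order (Lie--Trotter) operator splitting incurs a local error of $O(\Delta t^2)$, with the leading term proportional to the commutator of the two split vector fields. The key subtlety is that the decay half-step is solved \emph{exactly}: pulling toward the origin along a geodesic with factor $\alpha = e^{-\Delta t(\tfrac{1}{\boldsymbol{\tau}}+f)}$ is the exact flow of the linear decay ODE $\dot{\mathbf{u}} = -(\tfrac{1}{\boldsymbol{\tau}}+f)\odot\mathbf{u}$ in the tangent coordinates $\mathbf{u} = \mathrm{Log}_{\mathbf{o}}(\mathbf{x})$ at the origin, treating $f$ as frozen over the step. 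Hence no error enters from the stiff component, and the only truncation error comes from (i) freezing $f(\cdot)$ at its value $\mathbf{x}(t)$ over the interval, and (ii) the splitting commutator. Both contributions are $O(\Delta t^2)$ under the assumption that $f(\cdot)$ and $\mathbf{V}$ are smooth and the state remains in a compact region, so the manifold maps have bounded derivatives.

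Next I would establish \emph{stability} of error propagation. The decay step is a strict geodesic contraction toward $\mathbf{o}$: since $\alpha\in(0,1)$ componentwise (because $\tfrac{1}{\boldsymbol{\tau}}>0$ and, after bounding $f$ by the range of $\sigma$, the exponent is negative), the map $\mathbf{x}^* \mapsto \mathbf{x}_{k+1}$ is non-expansive with respect to $d_{\mathbb{P}}$ in the tangent frame at $\mathbf{o}$. The driving step, being the time-$\Delta t$ geodesic flow of a Lipschitz vector field, is Lipschitz in the initial condition with constant $1+L\Delta t+O(\Delta t^2)$ by Gr\"onwall on each factor; here $L$ bounds the spatial Lipschitz constant of $f(\cdot)\odot P_{\mathbf{o}\to\mathbf{x}}(\mathbf{V})$, which requires Lipschitz estimates for parallel transport and $\mathrm{AGG}(\cdot)$ on $\mathcal{M}_i$. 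Composing the two steps gives a one-step propagation factor $1+L\Delta t+O(\Delta t^2)$, and the discrete Gr\"onwall inequality then yields $E_{\text{global}} \le \tfrac{C}{L}(e^{LT}-1)\,\Delta t = O(\Delta t)$ over $[0,T]$.

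I expect the main obstacle to be the Lipschitz and boundedness estimates for the geometric operators on each factor, particularly parallel transport $P_{\mathbf{o}\to\mathbf{x}}$ and the logarithmic map near the boundary of the injectivity radius. On hyperbolic factors $\mathbb{H}^n$ these maps have exponentially growing derivatives as $d(\mathbf{o},\mathbf{x})\to\infty$, and on spherical factors $\mathbb{S}^n$ the logarithm degenerates at the antipode. The clean way to dispatch this is to invoke the stability/boundedness theorem for RLSTG (promised earlier in the paper) to confine the trajectory to a compact geodesic ball where all maps are smooth with uniformly bounded first derivatives; on such a ball the constants $L$ and $C$ are finite, making the argument rigorous. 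I would state this compactness as the standing hypothesis and remark that it is guaranteed by the contractive decay dynamics driving the state toward $\mathbf{o}$.
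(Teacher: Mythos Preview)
Your proposal follows essentially the same route as the paper: cast the GD step as a Lie--Trotter splitting $\Psi^A_{\Delta t}\circ\Psi^B_{\Delta t}$, show each substep and the splitting itself contribute $O(\Delta t^2)$ local error (with the leading splitting term given by the commutator $[A,B]$), and accumulate over $N=T/\Delta t$ steps to $O(\Delta t)$ global error. One small correction: you assert the decay half-step is the \emph{exact} flow of the stiff part because $\dot{\mathbf u}=-(\tfrac{1}{\boldsymbol\tau}+f)\odot\mathbf u$ integrates to $\mathbf u\mapsto\alpha\odot\mathbf u$ in tangent coordinates at $\mathbf o$; the paper is more careful here, noting only \emph{first-order consistency}, since the pullback of the manifold decay field $\mathrm{Proj}_{T_{\mathbf x}}\big((\tfrac{1}{\boldsymbol\tau}+f)\odot\mathrm{Log}_{\mathbf x}\mathbf o\big)$ to $T_{\mathbf o}\mathbb P$ agrees with $-C\odot\mathbf u$ only to first order when $C$ is not scalar (the differential of $\mathrm{Exp}_{\mathbf o}$ equals parallel transport only in the radial direction on constant-curvature factors). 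This does not damage your argument, as first-order consistency is all you need; conversely, your explicit stability step (contraction of the decay map composed with a $1+L\Delta t$ Lipschitz driving map, then discrete Gr\"onwall) is more complete than the paper's proof, which simply asserts that summing local errors gives the global bound.
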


\begin{figure}[t]
    \centering
    \includegraphics[width=0.7\linewidth]{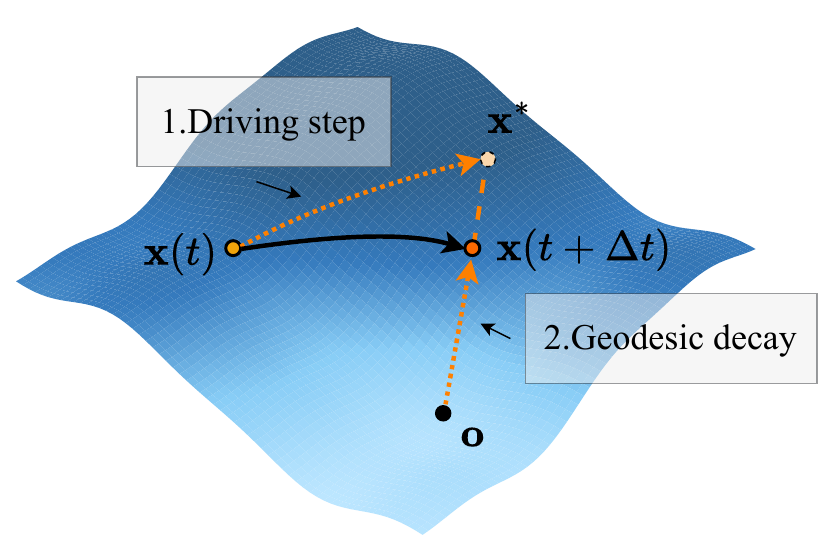}
    \vspace{-10pt}
    \caption{GD ODE solver employs an operator splitting technique to decouple the stiff dynamics. \textbf{(1) Driving step.} An explicit Euler update advances the state from $\mathbf{x}(t)$ to an intermediate point $\mathbf{x}^*$ based on the non-stiff driving input. \textbf{(2) GD:} The intermediate state $\mathbf{x}^*$ is analytically pulled towards the origin $\mathbf{o}$ to yield the final state $\mathbf{x}(t+\Delta t)$.}
    \label{fig:gd}
    \vspace{-10pt}
\end{figure}

As proven in Appendix~\ref{pf:solver_error}, this solver is first-order convergent.
The GD ODE solver therefore provides a robust and efficient framework for integrating our stiff dynamical system on the manifold, making it exceptionally well-suited for deep learning architectures.

\subsection{Numerical Stability of RLSTG}

Establishing the numerical stability of RLSTG is critical for its reliability. Riemannian embeddings, particularly in hyperbolic space, are commonly prone to numerical instability~\cite{chami2019hyperbolic}. Therefore, providing a rigorous theoretical guarantee for stability is not merely a formality but a practical necessity. This requires generalizing the stability analysis of Euclidean LTCs~\cite{hasani2021liquid} to the non-trivial Riemannian setting, where curved geometry fundamentally alters the system dynamics.

\begin{theorem}
Let the state of a node, $\mathbf{x}(t)$, evolve on a product Riemannian manifold $\mathbb{P} = \prod_{i=1}^{k} \mathcal{M}_i$ with origin $\mathbf{o} = (\mathbf{o}_1, \dots, \mathbf{o}_k)$ according to Eq.~\eqref{eq:rlstg_ode}. We define the system time constant for each component as $(\boldsymbol{\tau}_{\text{sys}})_i = \frac{\boldsymbol{\tau}_i}{1 + \boldsymbol{\tau}_i f_i(\cdot)}$. Then, for each component $i$ of the system, this quantity is bounded as follows:
\begin{equation}
\frac{\boldsymbol{\tau}_i}{1+\boldsymbol{\tau}_i} \le (\boldsymbol{\tau}_{\text{sys}})_i \le \tau_i.
\end{equation}
\end{theorem}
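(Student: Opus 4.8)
The plan is to reduce this Riemannian statement to an elementary algebraic estimate, after first observing that the only quantity involved is the scalar coefficient multiplying the decay (leak) direction in Eq.~\eqref{eq:rlstg_ode}. I would begin by identifying $(\boldsymbol{\tau}_{\text{sys}})_i$ with the reciprocal of the effective decay rate: in Eq.~\eqref{eq:rlstg_ode} the tangent vector $\mathrm{Log}_{\mathbf{x}(t)}\mathbf{o}$ plays exactly the role of the Euclidean leak term $-\mathbf{x}^{E}$ in Eq.~\eqref{eq:ltc}, and its per-component coefficient is $\frac{1}{\boldsymbol{\tau}_i}+f_i(\cdot)=\frac{1}{(\boldsymbol{\tau}_{\text{sys}})_i}$. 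This recovers the definition $(\boldsymbol{\tau}_{\text{sys}})_i=\tfrac{\boldsymbol{\tau}_i}{1+\boldsymbol{\tau}_i f_i(\cdot)}$ stated in the theorem, so the goal reduces to bounding this scalar expression uniformly in the state, input, and time.

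Second, I would pin down the two structural facts the bound rests on. From Eq.~\eqref{eq:gating_f} the gate is $f_i(\cdot)=\sigma(\cdots)$ with $\sigma$ the bounded (logistic sigmoid) activation, hence $f_i(\cdot)\in[0,1]$, and the learnable time constants satisfy $\boldsymbol{\tau}_i>0$. Treating the system time constant as a function of the gate value, I define $h(s)=\frac{\boldsymbol{\tau}_i}{1+\boldsymbol{\tau}_i s}$ for $s\in[0,1]$. Since $\boldsymbol{\tau}_i>0$, we have $h'(s)=-\frac{\boldsymbol{\tau}_i^2}{(1+\boldsymbol{\tau}_i s)^2}<0$, so $h$ is strictly decreasing and thus attains its maximum at $s=0$ and its minimum at $s=1$. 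Evaluating the endpoints yields $h(1)=\frac{\boldsymbol{\tau}_i}{1+\boldsymbol{\tau}_i}\le(\boldsymbol{\tau}_{\text{sys}})_i\le\boldsymbol{\tau}_i=h(0)$, which is precisely the claimed inequality.

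Finally, I would justify why the curved setting does not disturb this argument, which is the only genuinely non-trivial point. Because $\mathbb{P}=\prod_{i=1}^{k}\mathcal{M}_i$ is a product manifold, the operators $\mathrm{Log}$, $\mathrm{Proj}_{T_{\mathbf{x}}\mathbb{P}}$, and parallel transport all decompose factor-wise, as recorded in the Preliminaries, so Eq.~\eqref{eq:rlstg_ode} splits into independent per-component dynamics. The main obstacle is therefore conceptual rather than computational: one must verify that the geometric machinery (the nonlinear term $\mathrm{Log}_{\mathbf{x}}\mathbf{o}$ and the orthogonal projection) leaves the effective decay rate equal to $\frac{1}{(\boldsymbol{\tau}_{\text{sys}})_i}$, so that the bound reduces to the Euclidean LTC estimate~\cite{hasani2021liquid} applied component-wise. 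Since the projection is linear and the product structure decouples the factors, the scalar coefficient $\frac{1}{\boldsymbol{\tau}_i}+f_i(\cdot)$ is carried through unchanged, and the elementary monotonicity bound transfers verbatim to each component $i$.
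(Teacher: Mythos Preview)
Your argument is correct: once $(\boldsymbol{\tau}_{\text{sys}})_i=\boldsymbol{\tau}_i/(1+\boldsymbol{\tau}_i f_i)$ is taken as a definition and $f_i\in[0,1]$ is noted, the monotonicity of $s\mapsto\boldsymbol{\tau}_i/(1+\boldsymbol{\tau}_i s)$ on $[0,1]$ gives the bound immediately, and the product structure justifies the per-component reduction. The paper's proof, however, takes a longer and more dynamical route. It introduces the Lyapunov function $\psi_i(x)=\tfrac12 d_i^2(x,o_i)$, computes $\dot\psi_i$ along the Riemannian Liquid ODE using $\nabla\psi_i(x)=-\log_x(o_i)$ on a Hadamard factor, applies Cauchy--Schwarz together with the isometry of parallel transport, and arrives at the scalar differential inequality $\dot r_i\le-\alpha_i(t)r_i+f_i(t)\|V_i\|_i$ with $\alpha_i(t)=1/\boldsymbol{\tau}_i+f_i(t)$. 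Only after this derivation does the paper \emph{define} $(\boldsymbol{\tau}_{\text{sys}})_i:=\alpha_i^{-1}$ as the instantaneous exponential contraction rate of the geodesic radius, and then invoke the same monotonicity argument you use. What the paper's approach buys is a dynamical justification that $(\boldsymbol{\tau}_{\text{sys}})_i$ really is the system time constant governing decay on the curved manifold, not merely a scalar coefficient read off the ODE; what your approach buys is brevity, since the theorem statement already supplies the definition and the bound is purely algebraic.
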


As proven in Appendix \ref{pf:tau}, this theorem provides a foundational guarantee for stability. By leveraging the $[0, 1]$ range of the sigmoid gating function $f_i(\cdot)$, we show that the adaptive time constant governing the system's decay rate is strictly bounded. This prevents vanishing or exploding decay rates, a crucial step towards ensuring the overall boundedness of the state trajectory.

With the system time constant bounded, we establish our main stability result: the state trajectory itself is confined to a fixed region on the manifold.

\begin{theorem}
On the product manifold $\mathbb{P} = \prod_{i=1}^{k} \mathcal{M}_i$, denote the norm on $\mathcal{M}_i$ as $\|\cdot\|_i$. Let the state $\mathbf{x}(t) \in \mathbb{P}$ evolve according to the Riemannian Liquid ODE. Assume that the condition $d_i(\mathrm{exp}_{\mathbf{o}_i}(\pm \mathbf{V}_i), \mathbf{o}_i) \ge \boldsymbol{\tau}_i \| \mathbf{V}_i\|_i$ is met for each component. We denote $R = d(\mathrm{Exp}_{\mathbf{o}}( \mathbf{V}),\mathbf{o})$ and $r = d(\mathrm{Exp}_{\mathbf{o}}(-\mathbf{V}),\mathbf{o})$. Then for any finite time interval $[0,T]$, the state $\mathbf{x}(t)$ is bounded within an annulus centered at the origin $\mathbf{o}$:
\begin{equation}
r \le d(\mathbf{x}(t),\mathbf{o}) \le R.
\end{equation}
\end{theorem}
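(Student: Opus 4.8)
The plan is to exploit the product structure to reduce the claim to a one-dimensional comparison problem on each factor, and then to control the growth of the radial coordinate $\rho_i(t) := d_i(\mathbf{x}_i(t), \mathbf{o}_i)$ by a barrier (flow-invariance) argument. Since the geodesic distance on $\mathbb{P}$ is decoupled, $d_{\mathbb{P}} = \sum_i d_i$, and since $\mathrm{Log}$, $\mathrm{Exp}$, the tangent projection, and parallel transport all act factor-wise (so the ODE~\eqref{eq:rlstg_ode} splits into $k$ independent ODEs on the $\mathcal{M}_i$), it suffices to prove the two-sided bound $r_i \le \rho_i(t) \le R_i$ on each factor, where $R_i = d_i(\exp_{\mathbf{o}_i}(\mathbf{V}_i), \mathbf{o}_i)$ and $r_i = d_i(\exp_{\mathbf{o}_i}(-\mathbf{V}_i), \mathbf{o}_i)$, and then sum over $i$ to recover $R = \sum_i R_i$ and $r = \sum_i r_i$.

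First I would differentiate $\rho_i$ along the trajectory. Away from the cut locus of $\mathbf{o}_i$, the distance-to-origin function is smooth with gradient equal to the outward unit radial field, $\nabla d_i(\cdot,\mathbf{o}_i)\big|_{\mathbf{x}_i} = -\mathrm{Log}_{\mathbf{x}_i}(\mathbf{o}_i)/\rho_i$ (Gauss lemma / first variation of arc length). Pairing this with the vector field of~\eqref{eq:rlstg_ode} and treating $\boldsymbol{\tau}_i, f_i$ as scalars on the factor, the leak term contributes exactly $-(\tfrac{1}{\boldsymbol{\tau}_i}+f_i)\rho_i$ because $\langle \mathrm{Log}_{\mathbf{x}_i}(\mathbf{o}_i), \mathrm{Log}_{\mathbf{x}_i}(\mathbf{o}_i)\rangle = \rho_i^2$, while the driving term contributes $f_i\,\langle P_{\mathbf{o}_i\to\mathbf{x}_i}(\mathbf{V}_i),\, -\mathrm{Log}_{\mathbf{x}_i}(\mathbf{o}_i)/\rho_i\rangle$. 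Because parallel transport is a linear isometry and by Cauchy--Schwarz, this last inner product lies in $[-\|\mathbf{V}_i\|_i, \|\mathbf{V}_i\|_i]$, which yields the two differential inequalities
\begin{equation}
-\Big(\tfrac{1}{\boldsymbol{\tau}_i}+f_i\Big)\rho_i - f_i\|\mathbf{V}_i\|_i \;\le\; \dot\rho_i \;\le\; -\Big(\tfrac{1}{\boldsymbol{\tau}_i}+f_i\Big)\rho_i + f_i\|\mathbf{V}_i\|_i .
\end{equation}

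The upper bound then follows from a flow-invariance argument: evaluating the right-hand inequality at $\rho_i = R_i$ and using $0\le f_i\le 1$ together with the hypothesis $R_i\ge\boldsymbol{\tau}_i\|\mathbf{V}_i\|_i$ (and $R_i\le\|\mathbf{V}_i\|_i$, since $d_i(\exp_{\mathbf{o}_i}(\mathbf{V}_i),\mathbf{o}_i)$ is at most the length $\|\mathbf{V}_i\|_i$ of the radial geodesic) gives $\dot\rho_i\le 0$ on the sphere of radius $R_i$, so $\{\rho_i\le R_i\}$ is forward-invariant and $R=\sum_i R_i$ bounds $d(\mathbf{x}(t),\mathbf{o})$ from above.

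The lower bound is the delicate part and is where I expect the main obstacle: the symmetric manipulation at $\rho_i=r_i$ does not immediately produce $\dot\rho_i\ge 0$, because the parallel-transported driving $P_{\mathbf{o}_i\to\mathbf{x}_i}(\mathbf{V}_i)$ can carry an inward radial component on the $-\mathbf{V}_i$ side, so the crude Cauchy--Schwarz bound is too lossy to close the barrier. Handling this requires using the $-\mathbf{V}_i$ hypothesis $d_i(\exp_{\mathbf{o}_i}(-\mathbf{V}_i),\mathbf{o}_i)\ge\boldsymbol{\tau}_i\|\mathbf{V}_i\|_i$ together with the \emph{precise} alignment of the transported driving with the radial direction near $\exp_{\mathbf{o}_i}(-\mathbf{V}_i)$, rather than its worst-case magnitude; a clean way to organize this is to track the signed radial projection exactly and invoke the comparison/Gr\"onwall principle on the resulting scalar inequality. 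Throughout, I would also verify that the trajectory remains within the injectivity radius so that the first-variation formula applies, and justify the scalar treatment of the Hadamard-modulated, then reprojected, vector field on each factor.
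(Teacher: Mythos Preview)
Your approach to the upper bound coincides with the paper's. The paper works on each (Hadamard) factor, differentiates $\psi_i=\tfrac{1}{2}d_i^2$ (which yields your radial derivative after dividing by $r_i$), applies Cauchy--Schwarz together with the isometry of parallel transport, and arrives at the same scalar inequality $\dot r_i \le -(\tfrac{1}{\boldsymbol{\tau}_i}+f_i)\,r_i + f_i\|\mathbf{V}_i\|_i$. It then relaxes to $\dot r_i \le -r_i/\boldsymbol{\tau}_i + \|\mathbf{V}_i\|_i$ and solves the linear comparison ODE, obtaining $r_i(t)\le\max\{r_i(0),\,\boldsymbol{\tau}_i\|\mathbf{V}_i\|_i\}$ and hence forward invariance of the ball of radius $\boldsymbol{\tau}_i\|\mathbf{V}_i\|_i$. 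Your barrier argument at $\rho_i=R_i$ is the same conclusion packaged differently; the hypothesis $R_i\ge\boldsymbol{\tau}_i\|\mathbf{V}_i\|_i$ is exactly what lets one enlarge the invariant ball from radius $\boldsymbol{\tau}_i\|\mathbf{V}_i\|_i$ to $R_i=d_i(\exp_{\mathbf{o}_i}(\mathbf{V}_i),\mathbf{o}_i)$. (Minor point: the inequality $R_i\le\|\mathbf{V}_i\|_i$ you insert is true but not needed --- $R_i\ge\boldsymbol{\tau}_i\|\mathbf{V}_i\|_i$ alone already forces $\dot\rho_i\le 0$ at the barrier.)

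Your instinct on the lower bound is correct, and there is no trick you are overlooking: the paper's proof establishes \emph{only} the upper bound and forward invariance of the product ball. It never addresses $d(\mathbf{x}(t),\mathbf{o})\ge r$, never invokes the $-\mathbf{V}_i$ half of the hypothesis, and does not attempt the signed-radial refinement you sketch. The two-sided annulus in the theorem statement is therefore not actually delivered by the paper's argument. As you effectively observe, the crude Cauchy--Schwarz bound cannot close the inner barrier, and without an assumption on the initial state the lower bound as written cannot hold anyway (take $\mathbf{x}(0)=\mathbf{o}$). So the gap you flag is real, and it is a gap in the paper rather than in your plan.
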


The proof in Appendix \ref{pf:state} analyzes the time evolution of the state's geodesic distance to the origin, $d(\mathbf{x}(t), \mathbf{o})$. We show that the ODE's vector field creates a potential well on the manifold, ensuring any trajectory is perpetually confined within the boundary.

\subsection{Expressive power of RLSTG}

\subsubsection{Universal Approximation Theorem.}
We establish the universal approximation capability for RLSTG, extending the known property from Euclidean models~\cite{hornik1989multilayer, funahashi1993approximation, hasani2021liquid, kratsios2022universal-a, kratsios2022universal-b} to our framework on curved Riemannian manifolds, thereby bridging a key theoretical gap.

\begin{theorem}
Assume that the solution $\mathbf{u}(t)$ of an ODE on Riemannian manifold exists and is unique on the time interval $[0, T]$, and the solution trajectory is contained within a compact subset $D$ of an open set $S \subset \mathbb{P}$. Given an arbitrary precision $\varepsilon > 0$ and a time interval length $T > 0$, there exists a RLSTG, with dynamics defined by  Eq.~\eqref{eq:gd_solver_d}, such that the sequence $\{\mathbf{x}_k\}$ generated by the discrete network iterations satisfies:
\begin{equation}
\max_{0 \le k \le N} d(\mathbf{x}_k, \mathbf{u}(t_k)) < \varepsilon.
\end{equation}
\end{theorem}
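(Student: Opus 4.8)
The plan is to establish the universal approximation property by reducing the problem to the known Euclidean result for LTCs and then controlling the error introduced by the Riemannian operators. Since the trajectory $\mathbf{u}(t)$ lives in a compact set $D \subset S \subset \mathbb{P}$, I would first use the logarithmic map at the origin, $\mathrm{Log}_{\mathbf{o}}$, to pull the dynamics back into the tangent space $T_{\mathbf{o}}\mathbb{P} \cong \mathbb{R}^n$, where the liquid ODE becomes a Euclidean object. On a compact set the exponential and logarithmic maps are diffeomorphisms (away from the cut locus, which compactness lets us avoid by shrinking $S$), and crucially both maps together with parallel transport are Lipschitz continuous with constants depending only on $D$ and the curvature bounds of the factors $\mathcal{M}_i$. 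This bi-Lipschitz equivalence means that approximating $\mathbf{u}(t)$ in geodesic distance $d(\cdot,\cdot)$ is equivalent, up to constants, to approximating its tangent-space image $\mathrm{Log}_{\mathbf{o}}(\mathbf{u}(t))$ in the Euclidean norm.

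The key steps, in order, are as follows. First, I would decompose the total error via the triangle inequality into two pieces: a \emph{discretization error}, measuring how well the exact flow of the RLSTG ODE \eqref{eq:rlstg_ode} over $[0,T]$ is tracked by the GD solver iterates $\{\mathbf{x}_k\}$, and an \emph{approximation error}, measuring how well the RLSTG vector field can be made to match the target ODE's vector field using the gating network $f(\cdot)$. The discretization error is controlled directly by Theorem~1, which gives $E_{\text{global}} = O(\Delta t)$; choosing $N$ large enough (equivalently $\Delta t = T/N$ small) drives this term below $\varepsilon/2$. Second, for the approximation error I would invoke the universal approximation property of the MLP gating function $f(\cdot)$: since $f$ is built from an MLP composed with $\mathrm{Log}_{\mathbf{o}}$, $\mathrm{AGG}$, and the node encoder, and since MLPs are universal approximators of continuous functions on compact sets \cite{hornik1989multilayer}, the network can approximate any continuous target vector field on $D$ to arbitrary precision in the tangent space. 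Third, a Gr\"onwall-type argument on the manifold propagates this pointwise vector-field approximation into a uniform trajectory bound: writing $\rho(t) = d(\mathbf{x}(t), \mathbf{u}(t))$ and differentiating, the Lipschitz bounds on the Riemannian operators yield $\dot\rho(t) \le L\,\rho(t) + \delta$, so $\rho(T) \le (\delta/L)(e^{LT}-1)$, which is made smaller than $\varepsilon/2$ by shrinking the vector-field mismatch $\delta$.

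The main obstacle I anticipate is the Gr\"onwall argument on the curved product manifold, specifically controlling the growth of the geodesic distance $d(\mathbf{x}(t),\mathbf{u}(t))$ between two trajectories generated by different vector fields. On a Riemannian manifold $d(\cdot,\cdot)$ is only Lipschitz, not smooth, so its time derivative must be handled via the first variation of arc length or an upper Dini derivative, and the comparison requires a uniform lower bound on sectional curvature along the geodesic joining the two states to bound how parallel transport distorts the vector-field difference. On a product manifold $\mathbb{P} = \prod_i \mathcal{M}_i$ with the decoupled distance $d_{\mathbb{P}} = \sum_i d_{\mathcal{M}_i}$, this decomposes factor-wise, so it suffices to obtain the curvature-dependent Lipschitz constant $L_i$ on each constant-curvature factor and set $L = \max_i L_i$; compactness of $D$ guarantees these constants are finite. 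Assembling the two error bounds then gives $\max_{0 \le k \le N} d(\mathbf{x}_k, \mathbf{u}(t_k)) < \varepsilon/2 + \varepsilon/2 = \varepsilon$, completing the argument.
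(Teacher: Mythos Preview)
Your proposal follows the same high-level strategy as the paper: pull the target field back to the tangent space at $\mathbf{o}$, invoke a universal approximation result there, propagate vector-field closeness to trajectory closeness via a Riemannian Gr\"onwall inequality, and then combine with the $O(\Delta t)$ discretization bound of the GD solver by the triangle inequality. The paper's proof organizes exactly these ingredients, stating the Gr\"onwall bound as a preparatory lemma and splitting $\varepsilon$ just as you do.

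There is, however, one genuine gap. You assert that ``the network can approximate any continuous target vector field on $D$,'' appealing to the MLP inside $f(\cdot)$. But the RLSTG vector field is \emph{not} a free MLP output; it has the constrained form
\[
\Big(\tfrac{1}{\boldsymbol{\tau}}+f(\cdot)\Big)\odot\mathrm{Log}_{\mathbf{x}}(\mathbf{o})\;+\;f(\cdot)\odot P_{\mathbf{o}\to\mathbf{x}}(\mathbf{V}),
\]
with $f$ a sigmoid-valued gate and $\mathbf{V}$ a single learnable vector. Showing that this specific architecture can realize an arbitrary pulled-back field $H(v)=P_{\mathrm{Exp}_{\mathbf{o}}(v)\to\mathbf{o}}\bigl(F(\mathrm{Exp}_{\mathbf{o}}(v))\bigr)$ requires two additional moves that the paper makes explicit and that your outline omits. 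First, a \emph{vector-form} universal approximation lemma: any continuous $H$ on a compact set can be uniformly approximated by a finite sum $\sum_{m=1}^{M} g_m(v)\,A^{(m)}$ with scalar gates $g_m\in[0,1]$ and fixed vectors $A^{(m)}\in T_{\mathbf{o}}\mathbb{P}$, which is precisely the gate-times-transported-input structure of the driving term (with $M$ hidden units). Second, the decay term $\tfrac{1}{\boldsymbol{\tau}}\mathrm{Log}_{\mathbf{x}}(\mathbf{o})$ is always present and is not part of the target field; the paper neutralizes it by taking $\boldsymbol{\tau}$ large enough that $\sup_{x\in D_\eta}\|\mathrm{Log}_{\mathbf{o}}(x)\|/\boldsymbol{\tau}<\varepsilon_\ell/2$, which is possible because the domain is compact. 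Without these two steps your approximation-error bound does not close: a single gated diagonal channel cannot produce an arbitrary vector field, and the leak term does not vanish on its own. Once you supply them, your outline coincides with the paper's argument.
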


The complete proof is provided in Appendix~\ref{appendix:UAT}. The core strategy is to project the target vector field onto a common tangent space, where the classical universal approximation theorem can be invoked. We show that RLSTG architecture is capable of realizing this approximating network and bound the total error to prove that our model's trajectory can approximate the target trajectory to any specified precision.

\subsubsection{Trajectory Length as an Expressivity Metric.}

We quantify the model's expressive power via the length of its state trajectory in the tangent space. Let the global state be $X(t)=[x_1(t),\dots,x_n(t)]\in\mathbb P$ and fix an origin $o\in\mathbb P$ so that $\mathrm{Log}_o$ is well-defined on the (compact) domain reached by the trajectories. Write the tangent representation $Z(t)=\mathrm{Log}_o(X(t))=[z_1(t),\dots,z_n(t)]\in T_o\mathbb P\cong\mathbb R^d$ and use the Euclidean norm $\|\cdot\|_o$ on $T_o\mathbb P$. The expressivity metric is the curve length:
\begin{equation}
\ell(Z):=\int_0^T \big\|\tfrac{d}{dt}Z(t)\big\|_o\,dt.
\end{equation}
A longer trajectory implies a greater capacity to capture complex dynamics. Our analysis establishes that the expected trajectory length is lower-bounded by the quality of the spatial graph representation, captured by the expected squared norm of node representations in the tangent space, $\mathbb{E}[||z||^{2}]$:
\begin{equation}
\mathbb{E}[l(Z(t))] \ge \mathcal{O}(C_{\text{arch}} \cdot \mathbb{E}[||z||^{2}]) \cdot l(I(t)),
\end{equation}
where $C_{\text{arch}}$ is an architecture-dependent constant and $l(I(t))$ is the input signal's trajectory length.

This inequality directly links expressivity to metric distortion. Embedding a graph into a space of mismatched curvature (e.g., a tree in Euclidean space) compresses the representation, reducing $\mathbb{E}[||z||^{2}]$ and thus limiting the model's expressive power. Conversely, aligning the manifold's curvature with the graph's intrinsic geometry (e.g., Hyperbolic for trees, Spherical for cycles) minimizes distortion. This yields a more expansive representation with a larger $\mathbb{E}[||z||^{2}]$, thereby enhancing expressivity. The subsequent theorems formalize this geometric advantage for canonical graph structures.

\begin{figure*}[t]
    \centering
    \includegraphics[width=0.9\linewidth]{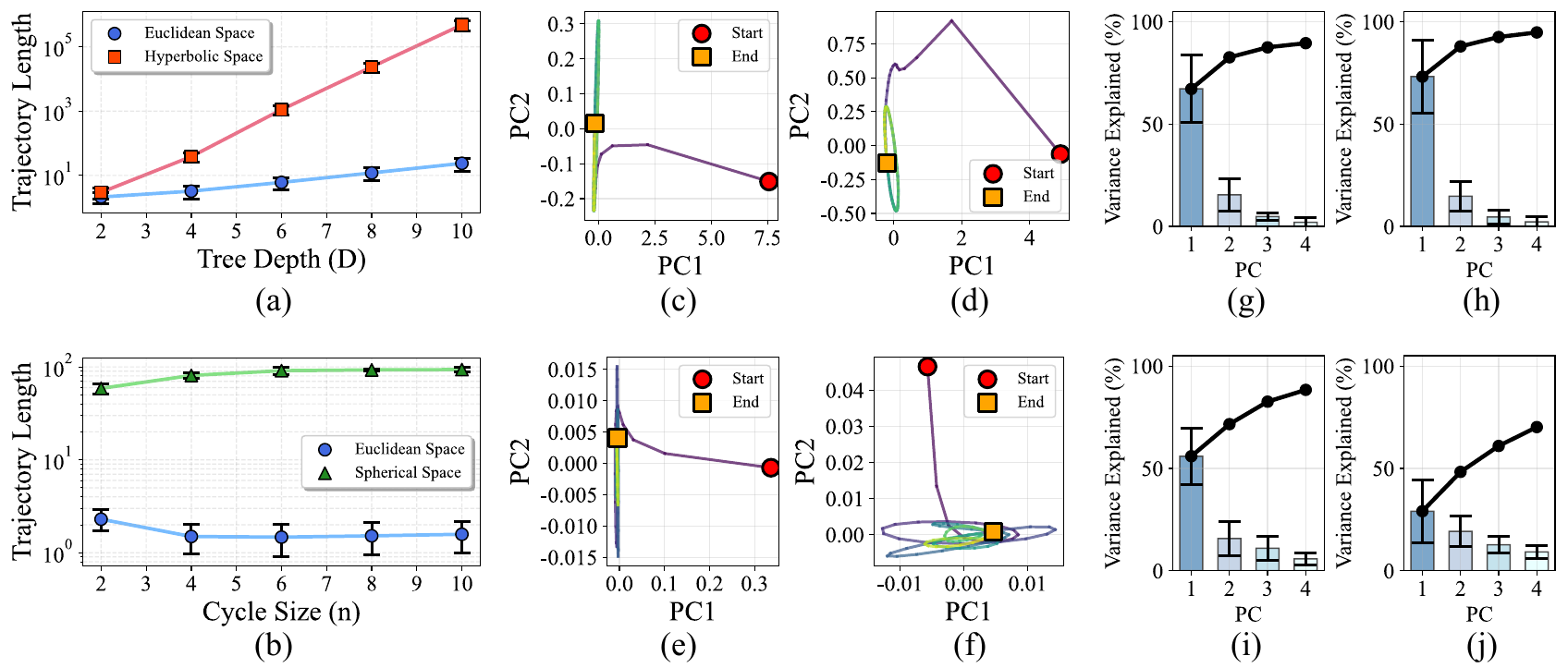}
    \vspace{-10pt}
    \caption{
Expressive power analysis.
        (a) Mean trajectory length versus tree depth (D) in Euclidean and Hyperbolic spaces.
        (b) Mean trajectory length versus cycle size (n) in Euclidean and Spherical spaces.
        (c, d) 2D Principal Component Analysis (PCA, described in~\cite{hasani2021liquid}) projections of the system trajectory for a tree graph (D=10) in Euclidean and Hyperbolic space.
        (e, f) 2D PCA projections for a cycle graph (n=10) in Euclidean and Spherical space.
        (g-j) Variance explained by the first four principal components (bars) and their cumulative contribution (black line). Each plot corresponds to an experimental setting: (g) Tree in Euclidean, (h) Tree in Hyperbolic, (i) Cycle in Euclidean, and (j) Cycle in Spherical space.
        In (c-f), the color gradient on the trajectory line indicates the temporal progression from the start to the end state.
    }
    \label{fig:traj}
\end{figure*}

\begin{theorem}
For an RLSTG model processing a tree of depth $D$, the lower bounds on the expected trajectory length $\mathbb{E}[l(Z(t))]$ for embeddings in Euclidean space $\mathbb{E}$ and Hyperbolic space $\mathbb{H}$ are, respectively:
\begin{equation}
\mathbb{E}[l(Z_{E}(t))] \ge \mathcal{O}(C_{\text{arch}} \cdot D) \cdot l(I(t)),
\end{equation}
\begin{equation}
\mathbb{E}[l(Z_{H}(t))] \ge \mathcal{O}(C_{\text{arch}} \cdot D^{2}) \cdot l(I(t)).
\end{equation}
\end{theorem}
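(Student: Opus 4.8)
The plan is to reduce both inequalities to the single general bound $\mathbb{E}[l(Z(t))]\ge \mathcal{O}(C_{\text{arch}}\cdot\mathbb{E}[\|z\|^2])\cdot l(I(t))$ established above, so that the entire theorem collapses to estimating the mean squared tangent-norm $\mathbb{E}[\|z\|^2]$ separately in $\mathbb{E}$ and $\mathbb{H}$. The first step is the identity $\|z_i\|_{\mathbf{o}}=\|\mathrm{Log}_{\mathbf{o}}(\mathbf{x}_i)\|_{\mathbf{o}}=d(\mathbf{x}_i,\mathbf{o})$, valid at the origin of each factor, which together with the convention that the tree root is mapped to $\mathbf{o}$ gives $\mathbb{E}[\|z\|^2]=\mathbb{E}[d(\mathbf{x},\mathbf{o})^2]$, the expected squared geodesic distance of a node to the root (the expectation taken over a uniformly chosen node of a complete $b$-ary tree). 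Both claimed lower bounds then follow once I show that this quantity is $\Theta(D^2)$ for the hyperbolic embedding and $\Theta(D)$ for the Euclidean one.

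For the hyperbolic case I would exhibit the classical low-distortion tree embedding, whose existence rests on the exponential growth of the volume of a hyperbolic ball, $\mathrm{vol}(B_r)\sim e^{(n-1)r}$, which matches the exponential node count $b^d$ in the depth-$d$ shell. Under such an embedding the tree edges may be oriented radially outward, placing a node at depth $d$ at geodesic distance $d(\mathbf{x},\mathbf{o})=\Theta(d)$ and hence giving $\|z\|^2=\Theta(d^2)$. Since a constant fraction of the nodes of a branching tree lie at depth $\Theta(D)$, averaging yields $\mathbb{E}[\|z\|^2]=\Theta(\mathbb{E}[\mathrm{depth}^2])=\Theta(D^2)$, and substitution into the general bound produces the $D^2$ estimate.

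For the Euclidean case the governing fact is the opposite volume behaviour: a Euclidean ball obeys $\mathrm{vol}(B_r)\sim r^n$, which grows too slowly to accommodate the exponentially many depth-$d$ nodes along a common radial direction. Maintaining the separation demanded by the tree metric therefore forces the successive edges of a root-to-leaf path into nearly mutually orthogonal directions, so that summing $d$ roughly orthonormal unit edges places the depth-$d$ node, by the Pythagorean law, only at radial distance $d(\mathbf{x},\mathbf{o})=\Theta(\sqrt{d})$, i.e. $\|z\|^2=\Theta(d)$. Averaging gives $\mathbb{E}[\|z\|^2]=\Theta(\mathbb{E}[\mathrm{depth}])=\Theta(D)$, and substitution yields the linear bound. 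The contrast between the $\Theta(\sqrt{d})$ (diffusive) and $\Theta(d)$ (ballistic) radial laws is precisely the metric distortion that the surrounding discussion attributes to curvature mismatch.

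The step I expect to be the main obstacle is making the Euclidean $\Theta(\sqrt{d})$ scaling rigorous as a genuine ceiling rather than a feature of one construction: while the stated lower bounds follow immediately from exhibiting the two natural embeddings, the scientific content (a true separation from the hyperbolic $D^2$) additionally requires that \emph{no} bounded-distortion Euclidean embedding can push the radial coordinate beyond $\Theta(\sqrt{\mathrm{depth}})$. I would close this either by citing the known $\Omega(\sqrt{\log N})$-type lower bound on the $\ell_2$-distortion of complete trees, or by a direct packing argument that uses the parallelogram law to bound the number of pairwise-separated points available on a Euclidean sphere of radius $r$, thereby capping the attainable radius of the depth-$d$ shell. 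The hyperbolic side is comparatively routine once the standard radial tree embedding is invoked, and the two displayed inequalities then follow by plugging the two estimates of $\mathbb{E}[\|z\|^2]$ into the general expressivity bound.
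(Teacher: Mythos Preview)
Your approach is essentially identical to the paper's: reduce to estimating $\mathbb{E}[\|z\|^2]$ via the general expressivity bound, then use the orthogonal-edge Euclidean construction (the paper writes it explicitly as $f(v)=f(p)+e_v$ with $\{e_v\}$ orthonormal, giving $\|z_j\|^2=d_G(o,j)$) and the Sarkar low-distortion hyperbolic embedding (giving $\|z_j\|^2=\Theta(d_G(o,j)^2)$), from which the $\mathcal{O}(D)$ and $\mathcal{O}(D^2)$ estimates follow. Your final paragraph's concern about proving the Euclidean $\Theta(\sqrt{d})$ radius is a genuine \emph{ceiling} is unnecessary for the theorem as stated---both displayed inequalities are lower bounds, so exhibiting one embedding per geometry already suffices, and the paper makes no tightness claim either.
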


\begin{proof}
By the architectural trajectory–length lower bound established earlier,
\begin{equation}
\label{eq:traj-lb}
\begin{aligned}
\mathbb{E}\!\big[\ell(Z(t))\big]
\;\ge\;
\mathcal{O}\!\Big(C_{\text{arch}}\cdot \mathbb{E}\big[\|z\|^{2}\big]\Big)\,\ell(I(t)).
\end{aligned}
\end{equation}
It suffices to bound $\mathbb{E}\big[\|z_j\|^{2}\big]$ in the two ambient geometries.

For the Euclidean case, embed the tree $T$ with $n$ nodes into $\mathbb{R}^{n}$ by fixing an orthonormal basis $\{e_v\}_{v\in V}$, setting $f(o)=\vec{0}$ for the root $o$, and for a non-root $v$ with parent $p$ defining $f(v)=f(p)+e_v$. If $d_G(o,x_j)=k$ and the unique path is $(v_0=o,v_1,\dots,v_k=x_j)$, then
\begin{equation}
f(x_j)=\sum_{i=1}^k e_{v_i},
\end{equation}
\begin{equation}
d_E(o,x_j)^2
=\Big\|\sum_{i=1}^k e_{v_i}\Big\|^2
=\sum_{i=1}^k \|e_{v_i}\|^2
=k
=d_G(o,x_j).
\end{equation}
Hence for a tree of depth $D$,
\begin{equation}
\begin{aligned}
\mathbb{E}\!\left[\|z_j\|_{E}^{2}\right]
=\mathbb{E}\!\left[d_E(o,x_j)^2\right]
=\mathbb{E}\!\left[d_G(o,j)\right]
=\mathcal{O}(D),
\end{aligned}
\end{equation}
and inserting into \eqref{eq:traj-lb} yields
\begin{equation}
\begin{aligned}
\mathbb{E}\!\left[\ell(Z_E(t))\right]
\;\ge\;
\mathcal{O}\!\big(C_{\text{arch}}\cdot D\big)\,\ell(I(t)).
\end{aligned}
\end{equation}

For the hyperbolic case, it is known that any finite tree admits an embedding into the hyperbolic plane with arbitrarily small distortion~\cite{sarkar2011low}, which gives $\mathrm{dist}(f)\le 1+\varepsilon$. Consequently,
\begin{equation}
\begin{aligned}
d_{\mathbb{H}}(o,x_j)
=\Theta\!\big(d_G(o,j)\big),
\qquad
\|z_j\|_{H}=d_{\mathbb{H}}(o,x_j)
=\Theta\!\big(d_G(o,j)\big),
\end{aligned}
\end{equation}
and for depth $D$,
\begin{equation}
\begin{aligned}
\mathbb{E}\!\left[\|z_j\|_{H}^{2}\right]
=\Theta\!\left(\mathbb{E}\!\left[d_G(o,j)^2\right]\right)
=\Theta(D^2).
\end{aligned}
\end{equation}
Plugging this into \eqref{eq:traj-lb} gives
\begin{equation}
\begin{aligned}
\mathbb{E}\!\left[\ell(Z_H(t))\right]
\;\ge\;
\mathcal{O}\!\big(C_{\text{arch}}\cdot D^{2}\big)\,\ell(I(t)).
\end{aligned}
\end{equation}
\end{proof}

\begin{theorem}\label{thm:cycle-constants}
Let $C_n$ be the $n$-cycle and let $Z_E(t)$ and $Z_S(t)$ denote the RLSTG trajectories in Euclidean and Spherical latent geometries. Then
\begin{equation}
\mathbb{E}\!\big[\ell(Z_E(t))\big] \;\ge\; \frac{C_{\mathrm{arch}}}{4\pi^2}\,n^2\,\ell(I(t))\,.
\end{equation}
Moreover,
\begin{equation}
\mathbb{E}\!\big[\ell(Z_S(t))\big] \;\ge\;
\begin{cases}
\displaystyle \frac{C_{\mathrm{arch}}}{12}\,(n^2-1)\,\ell(I(t))\,, & n \text{ odd},\\[6pt]
\displaystyle \frac{C_{\mathrm{arch}}}{12}\,(n^2+2)\,\ell(I(t))\,, & n \text{ even},
\end{cases}
\end{equation}
and therefore, for all $n\ge 3$,
\begin{equation}
\frac{\mathbb{E}[\ell(Z_S(t))]}{\mathbb{E}[\ell(Z_E(t))]}
\;\ge\;
\begin{cases}
\displaystyle \frac{\pi^2}{3}\!\left(1-\frac{1}{n^2}\right)\!, & n \text{ odd},\\[8pt]
\displaystyle \frac{\pi^2}{3}\!\left(1+\frac{2}{n^2}\right)\!, & n \text{ even},
\end{cases}
\end{equation}
and in particular
\begin{equation}
\lim_{n\to\infty}\frac{\mathbb{E}[\ell(Z_S(t))]}{\mathbb{E}[\ell(Z_E(t))]}=\frac{\pi^2}{3}\,.
\end{equation}
\end{theorem}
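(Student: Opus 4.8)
The plan is to reduce both bounds to the architectural trajectory-length inequality $\mathbb{E}[\ell(Z(t))] \ge \mathcal{O}(C_{\mathrm{arch}}\cdot\mathbb{E}[\|z\|^{2}])\,\ell(I(t))$ established earlier, so that the whole problem collapses to evaluating the expected squared tangent norm $\mathbb{E}[\|z\|^{2}]=\tfrac1n\sum_{j=0}^{n-1}\|\mathrm{Log}_{o}(x_j)\|^{2}$ for the natural minimal-distortion embedding of $C_n$ into each geometry. I would handle the two geometries independently and then divide. The key asymmetry to exploit is that the cycle metric admits an \emph{exact} isometric realization on a great circle but only a distorted realization in the plane, and this gap is precisely what survives as the constant factor.

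For the Euclidean case I would embed $C_n$ as a regular $n$-gon with unit edge lengths and take the origin $o$ to be its center. Every vertex then lies at the circumradius $R=\tfrac{1}{2\sin(\pi/n)}$, so $\|z_j\|=R$ is constant and $\mathbb{E}[\|z\|^{2}]=R^{2}=\tfrac{1}{4\sin^{2}(\pi/n)}$. Applying the elementary bound $\sin(\pi/n)\le \pi/n$ gives $\mathbb{E}[\|z\|^{2}]\ge \tfrac{n^{2}}{4\pi^{2}}$, and substituting into the trajectory inequality yields the first claim.

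For the spherical case I would rescale the sphere so that a great circle has circumference $n$, place the $n$ nodes at equal arc-spacing $1$, and let $o$ be one of them; then the embedding has distortion one, so $\|z_j\|=d_{\mathbb{S}}(o,x_j)=\min(j,n-j)=d_G(o,j)$ exactly and $\mathbb{E}[\|z\|^{2}]=\tfrac1n\sum_{j=0}^{n-1}\min(j,n-j)^{2}$. I would evaluate this discrete sum by splitting on parity and invoking $\sum_{k=1}^{m}k^{2}=\tfrac{m(m+1)(2m+1)}{6}$: for $n=2m+1$ this produces $\tfrac{n^{2}-1}{12}$, and for $n=2m$ it produces $\tfrac{n^{2}+2}{12}$ (the middle term $m^{2}$ being the source of the parity discrepancy). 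Feeding these exact values into the trajectory bound gives the two spherical lower bounds.

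Finally I would divide the spherical lower bound by the Euclidean one; since the factors $C_{\mathrm{arch}}$ and $\ell(I(t))$ cancel, the ratio is $\tfrac{(n^{2}-1)/12}{n^{2}/(4\pi^{2})}=\tfrac{\pi^{2}}{3}\bigl(1-\tfrac1{n^{2}}\bigr)$ for odd $n$ and $\tfrac{\pi^{2}}{3}\bigl(1+\tfrac2{n^{2}}\bigr)$ for even $n$, and letting $n\to\infty$ gives the limit $\pi^{2}/3$. The main obstacle I anticipate is the spherical step: one must justify carefully that the great-circle embedding is genuinely isometric, so that the slack introduced by $\sin(\pi/n)\le\pi/n$ on the Euclidean side has no spherical analogue — this absence of distortion is exactly what keeps the spherical numerator at $\Theta(n^{2})$ with the larger constant $\tfrac1{12}$ — and then to execute the parity split of $\sum\min(j,n-j)^{2}$ without miscounting the central term, since that miscount is what separates the $n^{2}-1$ and $n^{2}+2$ numerators. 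The Euclidean computation is comparatively routine once the origin is fixed at the center.
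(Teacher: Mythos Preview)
Your proposal is correct and follows essentially the same route as the paper: both arguments invoke the generic bound $\mathbb{E}[\ell(Z(t))]\ge C_{\mathrm{arch}}\,\mathbb{E}[\|z\|^{2}]\,\ell(I(t))$, realize $C_n$ as a unit-edge regular $n$-gon centered at the origin in the Euclidean case (using $\sin(\pi/n)\le\pi/n$ to pass from $r_E^{2}$ to $n^{2}/(4\pi^{2})$), embed $C_n$ isometrically on a great circle of circumference $n$ with a vertex as base point in the spherical case, and evaluate $\tfrac1n\sum_j\min(j,n-j)^{2}$ via the same parity split. The ratio and limit then follow by the same division step.
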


\begin{proof}
By the generic lower bound \eqref{eq:traj-lb}, a universal constant is absorbed into $C_{\mathrm{arch}}$$ so that $$\mathbb{E}[\ell(Z(t))]\ge C_{\mathrm{arch}}\,\mathbb{E}[\|z_j\|^2]\,\ell(I(t))$ holds for both geometries.

Euclidean case. Place vertices at$f_E(j)=r_E\big(\cos(2\pi j/n),\,\sin(2\pi j/n)\big)$, $r_E=\frac{1}{2\sin(\pi/n)}$, so adjacent Euclidean distances equal $1$. With the base point at the center, $\|z_j\|_E=r_E$ for all $j$, hence $\mathbb{E}[\|z_j\|_E^2]=r_E^2\;\ge\;\frac{n^2}{4\pi^2}$, since $\sin x\le x \text{ on }[0,\pi/2]$, and thus
\begin{equation}
\mathbb{E}\!\big[\ell(Z_E(t))\big]\;\ge\;\frac{C_{\mathrm{arch}}}{4\pi^2}\,n^2\,\ell(I(t))\,.
\end{equation}

Spherical case. Embed $C_n$ isometrically on $S^1$ of radius $r_S=n/(2\pi)$ so consecutive vertices have geodesic distance $1$~\cite{burago2001course}. Let $m\in\{0,1,\dots,\lfloor n/2\rfloor\}$ be the graph distance from a fixed base point along the shorter arc; then $\|z_j\|_S=m$ and
\begin{equation}
\mathbb{E}[\|z_j\|_S^2]=
\begin{cases}
\displaystyle \frac{2}{2k+1}\sum_{m=1}^{k}m^2=\frac{(2k+1)^2-1}{12}=\frac{n^2-1}{12}, & n=2k+1,\\[6pt]
\displaystyle \frac{2}{2k}\sum_{m=1}^{k-1}m^2+\frac{1}{2k}k^2=\frac{n^2+2}{12}, & n=2k,
\end{cases}
\end{equation}
where
\begin{equation}
\mathbb{E}\!\big[\ell(Z_S(t))\big]\;\ge\;
\begin{cases}
\displaystyle \frac{C_{\mathrm{arch}}}{12}\,(n^2-1)\,\ell(I(t))\,, & n \text{ odd},\\[6pt]
\displaystyle \frac{C_{\mathrm{arch}}}{12}\,(n^2+2)\,\ell(I(t))\,, & n \text{ even}.
\end{cases}
\end{equation}

The ratio bounds follow by division of the two estimates above.
\end{proof}

When geometry aligns with structure, expressivity improves—$D^2$ vs.\ $D$ on trees and, on cycles, $\Theta(n^2)$ in both geometries with a provable constant-factor advantage for Spherical. We empirically validate these findings in Figure \ref{fig:traj}. For a tree graph (c, d), the trajectory in Hyperbolic space is visibly more complex and expansive than its constrained counterpart in Euclidean space. A similar observation holds for a cycle graph (e, f), where the Spherical embedding enables a richer, more intricate trajectory. Finally, the bottom row (g-j) quantifies this complexity by showing the variance explained by the principal components. The higher cumulative variance explained by the first few components in the geometrically-aligned spaces (h, j) compared to the Euclidean ones (g, i) indicates that the learned dynamics are higher-dimensional and more expressive. Together, these results provide a rigorous and visually intuitive confirmation that aligning the model's geometry with the data's structure is fundamental to unlocking its expressive potential.

\section{Experiments}

To validate the RLSTG framework, our empirical evaluation is centered on two canonical tasks, node feature regression and edge link prediction. These tasks are chosen to demonstrate the model's capacity to capture complex spatial geometries and its effectiveness in handling evolving network topologies. The evaluation is completed by ablation studies that quantify the impact of the chosen geometric space and GD ODE solver.

\subsection{RLSTG Adaptation for Downstream Tasks}
RLSTG outputs $x_i\in\mathbb P$. Project to $T_o\mathbb P\cong\mathbb R^d$ and apply Euclidean heads.

\paragraph{Node Regression.}
\begin{align}
h_i = \mathrm{Log}_o(x_i)\in\mathbb R^d, \hat y_i = \mathrm{MLP}_{\mathrm{reg}}(h_i).
\end{align}

\paragraph{Link Prediction.}
\begin{align}
h_i &= \mathrm{Log}_o(x_i),\quad h_j=\mathrm{Log}_o(x_j)\in\mathbb R^d,\\
h_{ij} &= [\,h_i;h_j\,]\in\mathbb R^{2d},\quad
s_{ij} = \mathrm{MLP}_{\mathrm{link}}(h_{ij}),\quad
p_{ij}=\sigma(s_{ij}).
\end{align}

\subsection{Nodes Features Regression}

We evaluate RLSTG on node feature regression using three traffic forecasting benchmarks: METR-LA~\cite{li2018dcrnn_traffic}, PEMS03~\cite{guo2019attention}, and PEMS04 \cite{guo2019attention}. 
The METR-LA dataset is a widely-used real-world dataset for traffic forecasting tasks. It contains traffic speed data collected from 207 sensors on the highways of Los Angeles County over a period of 4 months. The PEMS03 dataset, from the Caltrans Performance Measurement System (PeMS), records 3 months of traffic flow data from 358 sensors in a Californian district. The PEMS04 dataset also originates from the Caltrans Performance Measurement System (PeMS) and covers traffic data from 307 sensors over 2 months in another district.
We compare against several previous models: DCRNN~\cite{li2018dcrnn_traffic}, GMAN~\cite{GMAN-AAAI2020}, AGCRN~\cite{bai2020adaptive}, STGODE~\cite{fang2021spatial}, STG-NCDE~\cite{choi2022STGNCDE}, TGODE~\cite{gravina2024tgode}, LTC~\cite{hasani2021liquid}.

To simulate realistic, non-uniform observation patterns, we subsample the time series, ensuring a minimum temporal distance of 3 between consecutive points. The task is to predict the features of the next sampled time step given the current one. We subsample time indices $\mathcal{T}=\{t_1<\dots<t_M\}$ with a minimum gap of $3$ steps to emulate irregular observations. Data are split train/val/test by time in an $8{:}1{:}1$ ratio. For each $i$, the input is $(N_i,\Delta t_i)$ with $N_i\in \mathbb{R}^{|V|\times d}$ and $\Delta t_i=t_{i+1}-t_i$; the target is $N_{i+1}$. Loss is $\mathcal{L}_{\mathrm{MAE}} \;=\; \frac{1}{|V|\,d}\,\big\lVert \hat{N}_{i+1}-N_{i+1}\big\rVert_{1}.$

The performance comparison, measured by Mean Absolute Error (MAE), is summarized in Table~\ref{tab:traffic_baseline}. The results indicate that our proposed model consistently achieves a lower prediction error compared to a wide range of existing methods across all traffic datasets. This suggests that capturing the intrinsic geometry of the traffic network leads to more accurate forecasts.

\begin{table}[t]
\centering
\small
\caption{MAE $\downarrow$ performance comparison for node features regression on traffic datasets, averaged over 5 separate runs. The best results are in \textbf{bold} and the second best are \underline{underlined}.}
\vspace{-10pt}
\begin{tabular}{l ccc}
\toprule
\textbf{Method} & \textbf{METR-LA} & \textbf{PEMS03} & \textbf{PEMS04} \\
\midrule
DCRNN      & 4.55 $\pm$ 0.14          & 19.53 $\pm$ 0.17          & 26.03 $\pm$ 0.24 \\
GMAN       & 3.78 $\pm$ 0.31          & 18.44 $\pm$ 0.35          & 25.90 $\pm$ 0.29 \\
AGCRN      & 3.58 $\pm$ 0.19          & 18.21 $\pm$ 0.08          & 25.01 $\pm$ 0.17 \\
STGODE     & \underline{2.89 $\pm$ 0.02} & 17.70 $\pm$ 0.05          & 24.58 $\pm$ 0.06 \\
STG-NCDE   & 2.93 $\pm$ 0.04          & 17.72 $\pm$ 0.10          & \underline{24.46 $\pm$ 0.09} \\
TGODE      & 2.98 $\pm$ 0.02          & \underline{17.69 $\pm$ 0.07} & 24.70 $\pm$ 0.04 \\
LTC        & 2.90 $\pm$ 0.03 & 17.74 $\pm$ 0.04          & 24.85 $\pm$ 0.07\\
\midrule
RLSTG      & \textbf{2.77 $\pm$ 0.01} & \textbf{17.49 $\pm$ 0.02} & \textbf{24.08 $\pm$ 0.01} \\
\bottomrule
\end{tabular}
\label{tab:traffic_baseline}
\vspace{-10pt}
\end{table}

\subsection{Edges Links Prediction}


We conduct experiments on the ENRON social network dataset~\cite{benson2018simplicial} for links prediction task.  The ENRON dataset is a dynamic graph dataset that records email communications among 184 employees over 3 years. We compare RLSTG with previous methods, including JODIE~\cite{kumar2019predicting}, DyRep~\cite{trivedi2019dyrep}, TGAT~\cite{xu2020inductive}, TCL~\cite{wang2021tcl}, TGN~\cite{tgn_icml_grl2020}, GraphMixer~\cite{cong2023do}, DyGFormer~\cite{yu2023towards}, HTGN~\cite{yang2021discrete}, FreeDyG~\cite{tian2024freedyg}, HGWaveNet~\cite{bai2023hgwavenet}.For the Enron email network, messages are grouped by day to form snapshots $\{(V_i,E_i)\}_{i=1}^N$. We predict edges at $t_{i+1}$ given the history up to and including $t_i$. The timeline is split $70\%/15\%/15\%$ for train/val/test. Positives are $P_{i+1}=E_{i+1}$. Negatives are drawn with \emph{historical negative sampling}: from $H_i=\bigcup_{k\le i}E_k$, sample an equal-sized set $\tilde N_{i+1}\subseteq H_i\setminus E_{i+1}$ (hard negatives that previously occurred but are absent at $t_{i+1}$). We additionally report performance on \textit{transductive} (both endpoints seen in training) and \textit{inductive} (at least one new node) subsets. The loss on balanced positives/negatives:
$\mathcal{L}_{\mathrm{BCE}} \;=\; -\frac{1}{|\mathcal{D}_{i+1}|}\sum_{(u,v,y)\in \mathcal{D}_{i+1}}
\Big[y\log \hat{p}_{uv}^{(i+1)} + (1-y)\log \big(1-\hat{p}_{uv}^{(i+1)}\big)\Big],$
where $\mathcal{D}_{i+1}=P_{i+1}\cup \tilde N_{i+1}$ and $\hat{p}_{uv}^{(i+1)}$ is the model's predicted probability for edge $(u,v)$ at $t_{i+1}$.
Table~\ref{tab:link_prediction_hist} shows RLSTG demonstrates advantage in predicting future connections, outperforming other methods in both transductive and inductive scenarios. This highlights its robust generalization capability for dynamic graphs with evolving topologies.

\begin{table}[t]
\centering
\small
\caption{Average Precision $\uparrow$ for temporal link-prediction results on ENRON datasets with historical negative sampling strategy, averaged over 5 separate runs. The best results are in \textbf{bold} and the second best are \underline{underlined}.}
\vspace{-10pt}
\begin{tabular}{lcc}
\toprule
\textbf{Method} & \textbf{Transductive} & \textbf{Inductive} \\
\midrule
JODIE      & 69.89 $\pm$ 0.21          & 66.51 $\pm$ 0.64 \\
DyRep      & 71.13 $\pm$ 0.52          & 64.72 $\pm$ 0.23 \\
TCL        & 70.23 $\pm$ 0.12          & 67.88 $\pm$ 0.43 \\
TGAT       & 64.98 $\pm$ 0.53          & 61.12 $\pm$ 0.17 \\
TGN        & 73.18 $\pm$ 0.32          & 62.29 $\pm$ 0.74 \\
GraphMixer & 72.88 $\pm$ 0.32          & 71.29 $\pm$ 0.34 \\
DyGFormer  & 75.16 $\pm$ 1.22          & 68.29 $\pm$ 2.38 \\
HTGN       & 79.87 $\pm$ 0.78          & 70.73 $\pm$ 0.83 \\
FreeDyG    & 79.98 $\pm$ 0.54          & \underline{71.97 $\pm$ 0.27} \\
HGWaveNet  & \underline{80.55 $\pm$ 0.48} & 71.42 $\pm$ 0.38 \\
\midrule
RLSTG      & \textbf{83.14 $\pm$ 0.21} & \textbf{73.26 $\pm$ 0.10} \\
\bottomrule
\end{tabular}
\label{tab:link_prediction_hist}
\end{table}


\subsection{Ablation Studies}

\subsubsection{Ablation 1: The Impact of Geometric Space}

\begin{table*}[t]
  \centering
  \small
    \caption{Geometric Ablation on Node Regression and Link Prediction Tasks. The best results are in \textbf{bold} and the second best are \underline{underlined}, averaged over 5 separate runs.}
  \vspace{-10pt}
  \begin{tabular}{l|ccc||cc}
    \toprule
    & \multicolumn{3}{c||}{\textbf{Nodes Features Regression} (MAE$\downarrow$)} & \multicolumn{2}{c}{\textbf{Link Prediction} (AP$\uparrow$)} \\
    \cmidrule(lr){2-4} \cmidrule(lr){5-6}
    \textbf{Manifold} & \textbf{METR-LA} & \textbf{PEMS03} & \textbf{PEMS04} & \textbf{Transductive} & \textbf{Inductive} \\
    \midrule
    $\mathbb{E}^{17}$ & 2.85 $\pm$ 0.02 & 17.60 $\pm$ 0.02 & 24.21 $\pm$ 0.01 & 80.03 $\pm$ 0.23 & 71.05 $\pm$ 0.28 \\
    $\mathbb{E}^{16}$ & 2.86 $\pm$ 0.01 & 17.62 $\pm$ 0.02 & 24.17 $\pm$ 0.02 & 80.11 $\pm$ 0.37 & 71.15 $\pm$ 0.22 \\
    $\mathbb{S}^{16}$ & 2.85 $\pm$ 0.03 & \underline{17.51 $\pm$ 0.01} & 24.22 $\pm$ 0.04 & 76.24 $\pm$ 0.16 & 64.10 $\pm$ 0.25 \\
    $\mathbb{H}^{16}$ & 2.83 $\pm$ 0.02 & 17.52 $\pm$ 0.02 & 24.12 $\pm$ 0.02 & \textbf{83.14 $\pm$ 0.21} & \textbf{73.26 $\pm$ 0.11} \\
    \midrule
    $\mathbb{H}^{8}\times\mathbb{E}^{8}$ & \underline{2.80 $\pm$ 0.02} & 17.59 $\pm$ 0.02 & \underline{24.11 $\pm$ 0.03} & \underline{81.50 $\pm$ 0.25} & \underline{71.40 $\pm$ 0.19} \\
    $\mathbb{E}^{8}\times\mathbb{S}^{8}$ & \textbf{2.77 $\pm$ 0.01} & \textbf{17.49 $\pm$ 0.02} & 24.19 $\pm$ 0.02 & 79.40 $\pm$ 0.24 & 68.30 $\pm$ 0.28 \\
    $\mathbb{H}^{8}\times\mathbb{S}^{8}$ & 2.83 $\pm$ 0.02 & 17.55 $\pm$ 0.04 & \textbf{24.08 $\pm$ 0.01} & 81.03 $\pm$ 0.26 & 70.95 $\pm$ 0.17 \\
    \bottomrule
  \end{tabular}
  \label{tab:geo_ablation}
\end{table*}

\begin{table}[t]
\centering
\small
\caption{ODE Solver Ablation. We report MAE and relative computation time, normalized to the Euler method.}
\vspace{-10pt}
\begin{tabular}{lcc}
\toprule
\textbf{ODE Solver} & \textbf{MAE} $\downarrow$ & \textbf{Time (Relative)} \\
\midrule
Euler & 1.43 $\pm$ 0.20 & 1.0$\times$ \\
RK4 & 0.75 $\pm$ 0.27 & 4.5$\times$ \\
\midrule
GD (unfold=1) & 1.32 $\pm$ 0.07 & 1.5$\times$ \\
GD (unfold=4) & \textbf{0.46 $\pm$ 0.12} & 5.6$\times$ \\
\bottomrule
\end{tabular}
\label{tab:solver_comp}
\vspace{-5pt}
\end{table}

\begin{figure}[t]
    \centering
    \includegraphics[width=0.7\linewidth]{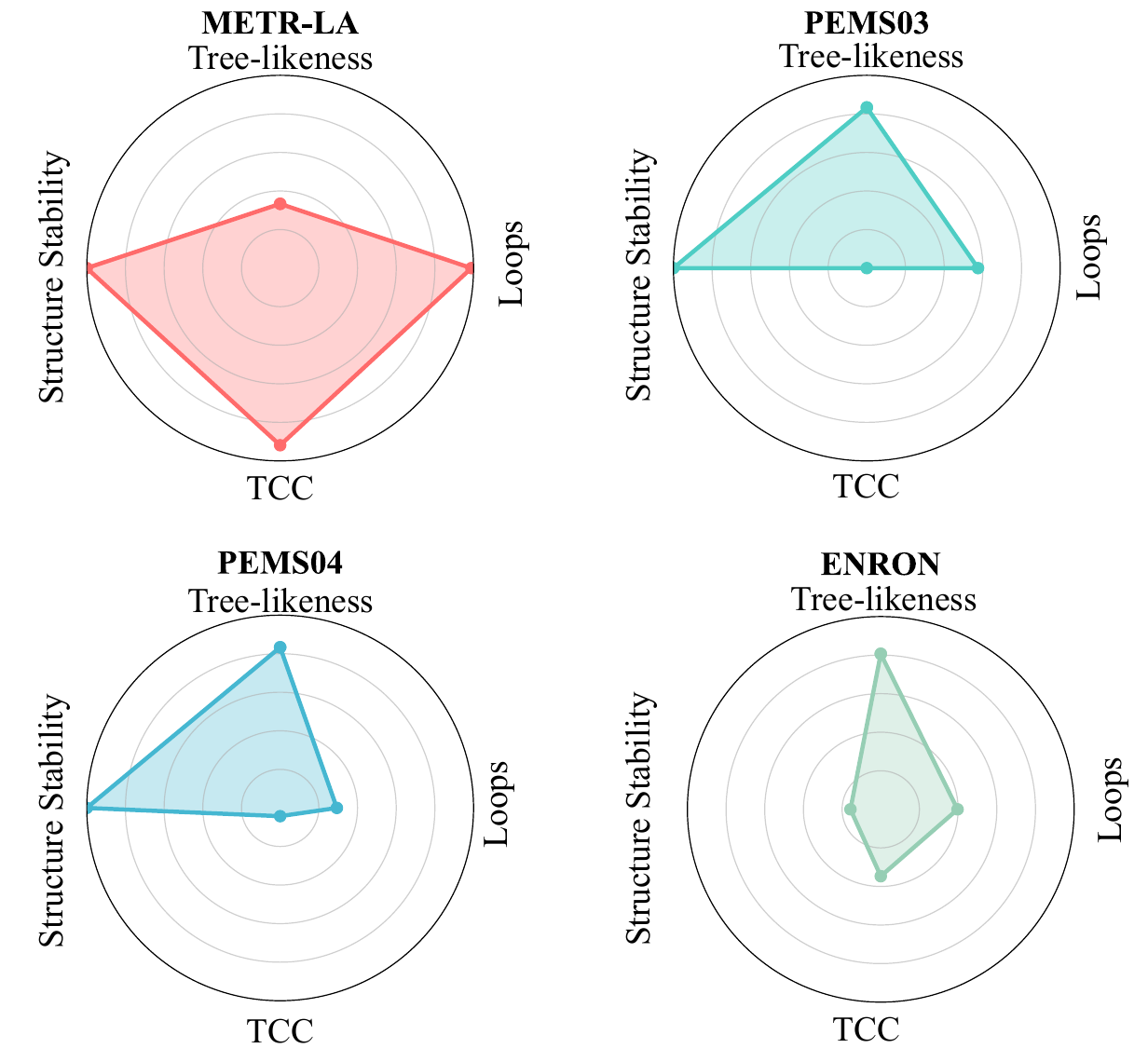}
    \vspace{-10pt}
    \caption{Geometric analysis of benchmark datasets. Tree-likeness is measured by $\frac{1}{\delta}$. Loops is measured by $\ln(1+\beta_1)$. \(\delta\) denotes Gromov hyperbolicity and \(\beta_{1}\) the first Betti number.}
    \label{fig:topo}
    \vspace{-10pt}
\end{figure}

To isolate and quantify the contribution of the geometric representation space, we conducted a rigorous ablation study, with results presented in Table \ref{tab:geo_ablation}. The analysis demonstrates that aligning the model's geometry with the data's intrinsic structure is more critical to performance than simply increasing representation capacity. Recognizing that real-world graphs rarely conform perfectly to specific spaces, the selection of an appropriate product manifold is guided by a quantitative analysis of the data's intrinsic topological properties, aiming to mitigate geometric distortion. This analysis is detailed in the Appendix~\ref{appendix:Geometric_Features} and visualized in Figure~\ref{fig:topo}, which reveals the distinct geometric signatures of the benchmark datasets and provides a basis for interpreting the ablation results.

The ablation results is in Table \ref{tab:geo_ablation}. Across \emph{METR-LA}, \emph{PEMS03}, \emph{PEMS04}, and \emph{ENRON}, the ablations validate our central claim that RLSTG reduces metric distortion when the embedding curvature is matched to measured topology, and that mixed-curvature products help only when signals are genuinely heterogeneous. \emph{METR-LA} exhibits pronounced cyclicality with only moderate hyperbolicity, hence $\mathbb{E}^{8}\!\times\!\mathbb{S}^{8}$ attains the best MAE, narrowly ahead of $\mathbb{H}^{8}\!\times\!\mathbb{E}^{8}$ and clearly outperforming single-curvature $\mathbb{H}^{16}$/$\mathbb{S}^{16}$, showing that explicitly modeling dense global/local loops is more beneficial than over-emphasizing a weak hierarchy. \emph{PEMS03} is near tree-like yet punctuated by large loops; here $\mathbb{E}^{8}\!\times\!\mathbb{S}^{8}$ slightly surpasses $\mathbb{S}^{16}$ and hyperbolic products, indicating that (i) faithfully encoding cycles via $\mathbb{S}$ is crucial, and (ii) the overall planar layout is better captured by $\mathbb{E}$ than by $\mathbb{H}$ despite low $\delta$. \emph{PEMS04} is almost purely hierarchical, so hyperbolic geometry is necessary and sufficient. Finally, \emph{ENRON} combines high topological volatility with a strong hierarchical backbone and moderate clustering. A single, correct inductive bias dominates, and pure $\mathbb{H}^{16}$ decisively wins for link prediction, while product spaces add no value. These findings collectively underscore that a tailored geometric approach is more critical than the representational capacity of a generic Euclidean space.

\subsubsection{Ablation 2: The Contribution of the GD Solver}

The efficacy of GD solver was empirically validated via an ablation study against manifold-generalized Euler and RK4 integrators (details in Appendix~\ref{appendix:other_ode}). The task involved integrating the Riemannian Liquid ODE on a $\mathbb{S}^2$ for 100 randomly initialized trajectories. We evaluated performance using MAE against a high-precision ground truth and relative computation time.

As shown in Table~\ref{tab:solver_comp}, the GD solver with four unfold steps (GD, unfold=4), a configuration that more precisely integrates the stiff ODE component, doubles the accuracy compared to RK4 with comparable computational overhead. In contrast, the Euler method is computationally efficient but highly inaccurate, while the accuracy of RK4 is offset by its substantial computational cost. Our solver thus provides a superior balance between accuracy and efficiency. A visual comparison in Figure \ref{fig:solver_com} provides a clear, intuitive confirmation of these numerical findings.

\begin{figure}[t]
    \centering
    \includegraphics[width=0.8\linewidth]{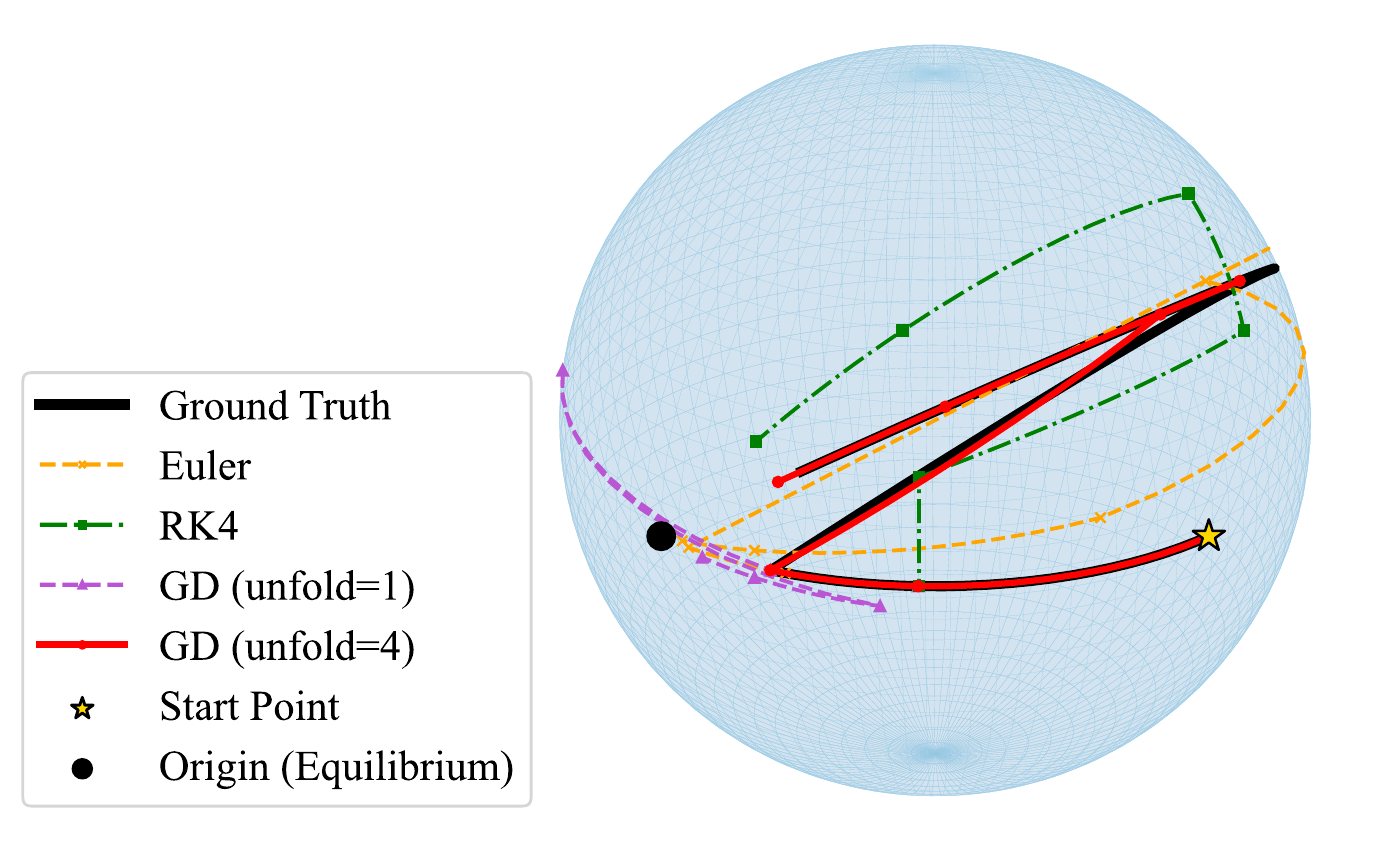}
    \vspace{-10pt}
    \caption{Trajectories on $\mathbb{S}^2$ for stiff system with decay phase (toward the black point) followed by driven phase (away from it). The trajectory generated by our  solver (red) is visually almost indistinguishable from the ground truth. It faithfully captures the intricate path along the sphere, demonstrating its ability to handle both the stiff decay forces and the driving input within the curved geometry with high precision and stability.}
    \label{fig:solver_com}
    \vspace{-15pt}
\end{figure}

\section{Conclusions, Impact and Future Work}

We introduced the Riemannian Liquid Spatio-Temporal Graph Network (RLSTG), a framework that unifies LTC dynamics with Riemannian geometry. RLSTG reduces the geometric distortion of Euclidean models by evolving dynamics on manifolds matched to the data's intrinsic structure. We extend theoretical guarantees of stability and universal approximation for LTCs to the Riemannian domain. Our work bridges continuous-time dynamics and geometric deep learning, enabling more faithful modeling of complex networks. Future research could address current limitations, including computational cost and fixed geometries, by exploring directions such as enhanced efficiency, automated manifold selection, and dynamic geometric representations.

\balance
\bibliographystyle{ACM-Reference-Format}
\bibliography{sample-base}

@article{chen2018neuralode,
  title={Neural Ordinary Differential Equations},
  author={Chen, Ricky T. Q. and Rubanova, Yulia and Bettencourt, Jesse and Duvenaud, David},
  journal={Advances in Neural Information Processing Systems},
  year={2018}
}

@article{rubanova2019latent,
  title={Latent ordinary differential equations for irregularly-sampled time series},
  author={Rubanova, Yulia and Chen, Ricky TQ and Duvenaud, David K},
  journal={Advances in neural information processing systems},
  volume={32},
  year={2019}
}

@article{kidger2020neural,
  title={Neural controlled differential equations for irregular time series},
  author={Kidger, Patrick and Morrill, James and Foster, James and Lyons, Terry},
  journal={Advances in neural information processing systems},
  volume={33},
  pages={6696--6707},
  year={2020}
}

@inproceedings{hasani2021liquid,
  title={Liquid time-constant networks},
  author={Hasani, Ramin and Lechner, Mathias and Amini, Alexander and Rus, Daniela and Grosu, Radu},
  booktitle={Proceedings of the AAAI Conference on Artificial Intelligence},
  volume={35},
  number={9},
  pages={7657--7666},
  year={2021}
}

@article{nickel2017poincare,
  title={Poincar{\'e} embeddings for learning hierarchical representations},
  author={Nickel, Maximillian and Kiela, Douwe},
  journal={Advances in neural information processing systems},
  volume={30},
  year={2017}
}

@article{ganea2018hyperbolic,
  title={Hyperbolic neural networks},
  author={Ganea, Octavian and B{\'e}cigneul, Gary and Hofmann, Thomas},
  journal={Advances in neural information processing systems},
  volume={31},
  year={2018}
}

@article{chami2019hyperbolic,
  title={Hyperbolic graph convolutional neural networks},
  author={Chami, Ines and Ying, Zhitao and R{\'e}, Christopher and Leskovec, Jure},
  journal={Advances in neural information processing systems},
  volume={32},
  year={2019}
}

@inproceedings{gravina2024tgode,
  title     = {Temporal Graph ODEs for Irregularly-Sampled Time Series},
  author    = {Gravina, Alessio and Zambon, Daniele and Bacciu, Davide and Alippi, Cesare},
  booktitle = {Proceedings of the Thirty-Third International Joint Conference on
           Artificial Intelligence, {IJCAI-24}},
  publisher = {International Joint Conferences on Artificial Intelligence Organization},
  editor    = {Kate Larson},
  pages     = {4025--4034},
  year      = {2024},
  month     = {8},
  note      = {Main Track},
  doi       = {10.24963/ijcai.2024/445},
  url       = {https://doi.org/10.24963/ijcai.2024/445},
}

@inproceedings{fang2021spatial,
  title={Spatial-temporal graph ode networks for traffic flow forecasting},
  author={Fang, Zheng and Long, Qingqing and Song, Guojie and Xie, Kunqing},
  booktitle={Proceedings of the 27th ACM SIGKDD conference on knowledge discovery \& data mining},
  pages={364--373},
  year={2021}
}

@inproceedings{li2018dcrnn_traffic,
  title={Diffusion Convolutional Recurrent Neural Network: Data-Driven Traffic Forecasting},
  author={Li, Yaguang and Yu, Rose and Shahabi, Cyrus and Liu, Yan},
  booktitle={International Conference on Learning Representations (ICLR '18)},
  year={2018}
}

@article{hornik1989multilayer,
  title={Multilayer feedforward networks are universal approximators},
  author={Hornik, Kurt and Stinchcombe, Maxwell and White, Halbert},
  journal={Neural networks},
  volume={2},
  number={5},
  pages={359--366},
  year={1989},
  publisher={Elsevier}
}

@article{funahashi1993approximation,
  title={Approximation of dynamical systems by continuous time recurrent neural networks},
  author={Funahashi, Ken-ichi and Nakamura, Yuichi},
  journal={Neural networks},
  volume={6},
  number={6},
  pages={801--806},
  year={1993},
  publisher={Elsevier}
}

@inproceedings{mostafa2022hyperbolic,
  title={Hyperbolic spatial temporal graph convolutional networks},
  author={Mostafa, Abdelrahman and Peng, Wei and Zhao, Guoying},
  booktitle={2022 IEEE International Conference on Image Processing (ICIP)},
  pages={3301--3305},
  year={2022},
  organization={IEEE}
}

@inproceedings{bachmann2020constant,
  title={Constant curvature graph convolutional networks},
  author={Bachmann, Gregor and B{\'e}cigneul, Gary and Ganea, Octavian},
  booktitle={International conference on machine learning},
  pages={486--496},
  year={2020},
  organization={PMLR}
}

@inproceedings{gu2018learning,
  title={Learning mixed-curvature representations in product spaces},
  author={Gu, Albert and Sala, Frederic and Gunel, Beliz and R{\'e}, Christopher},
  booktitle={International conference on learning representations},
  year={2018}
}

@inproceedings{marino2024liquid,
  title={Liquid-Graph Time-Constant Network for Multi-Agent Systems Control},
  author={Marino, Antonio and Pacchierotti, Claudio and Giordano, Paolo Robuffo},
  booktitle={Conference on decision and control 2024},
  year={2024}
}

@article{hasani2022closed,
  title={Closed-form continuous-time neural networks},
  author={Hasani, Ramin and Lechner, Mathias and Amini, Alexander and Liebenwein, Lucas and Ray, Aaron and Tschaikowski, Max and Teschl, Gerald and Rus, Daniela},
  journal={Nature Machine Intelligence},
  volume={4},
  number={11},
  pages={992--1003},
  year={2022},
  publisher={Nature Publishing Group UK London}
}

@inproceedings{sarkar2011low,
  title={Low distortion delaunay embedding of trees in hyperbolic plane},
  author={Sarkar, Rik},
  booktitle={International symposium on graph drawing},
  pages={355--366},
  year={2011},
  organization={Springer}
}

@book{burago2001course,
  title={A course in metric geometry},
  author={Burago, Dmitri and Burago, Yuri and Ivanov, Sergei and others},
  volume={33},
  year={2001},
  publisher={American Mathematical Society Providence}
}

@inproceedings{kumar2019predicting,
  title={Predicting dynamic embedding trajectory in temporal interaction networks},
  author={Kumar, Srijan and Zhang, Xikun and Leskovec, Jure},
  booktitle={Proceedings of the 25th ACM SIGKDD international conference on knowledge discovery \& data mining},
  pages={1269--1278},
  year={2019}
}

@inproceedings{trivedi2019dyrep,
  title={Dyrep: Learning representations over dynamic graphs},
  author={Trivedi, Rakshit and Farajtabar, Mehrdad and Biswal, Prasenjeet and Zha, Hongyuan},
  booktitle={International conference on learning representations},
  year={2019}
}

@article{wang2021tcl,
  title={Tcl: Transformer-based dynamic graph modelling via contrastive learning},
  author={Wang, Lu and Chang, Xiaofu and Li, Shuang and Chu, Yunfei and Li, Hui and Zhang, Wei and He, Xiaofeng and Song, Le and Zhou, Jingren and Yang, Hongxia},
  journal={arXiv preprint arXiv:2105.07944},
  year={2021}
}

@inproceedings{tgn_icml_grl2020,
    title={Temporal Graph Networks for Deep Learning on Dynamic Graphs},
    author={Emanuele Rossi and Ben Chamberlain and Fabrizio Frasca and Davide Eynard and Federico 
    Monti and Michael Bronstein},
    booktitle={ICML 2020 Workshop on Graph Representation Learning},
    year={2020}
}

@inproceedings{cong2023do,
title={Do We Really Need Complicated Model Architectures For Temporal Networks?},
author={Weilin Cong and Si Zhang and Jian Kang and Baichuan Yuan and Hao Wu and Xin Zhou and Hanghang Tong and Mehrdad Mahdavi},
booktitle={The Eleventh International Conference on Learning Representations },
year={2023},
url={https://openreview.net/forum?id=ayPPc0SyLv1}
}

@article{yu2023towards,
  title={Towards better dynamic graph learning: New architecture and unified library},
  author={Yu, Le and Sun, Leilei and Du, Bowen and Lv, Weifeng},
  journal={Advances in Neural Information Processing Systems},
  volume={36},
  pages={67686--67700},
  year={2023}
}

@inproceedings{tian2024freedyg,
  title={Freedyg: Frequency enhanced continuous-time dynamic graph model for link prediction},
  author={Tian, Yuxing and Qi, Yiyan and Guo, Fan},
  booktitle={The twelfth international conference on learning representations},
  year={2024}
}

@inproceedings{bai2023hgwavenet,
  title={HGWaveNet: A Hyperbolic Graph Neural Network for Temporal Link Prediction},
  author={Bai, Qijie and Nie, Changli and Zhang, Haiwei and Zhao, Dongming and Yuan, Xiaojie},
  booktitle={Proceedings of the ACM Web Conference 2023},
  pages={523--532},
  year={2023}
}

@InProceedings{gravina2024ctan,
    title = {Long Range Propagation on Continuous-Time Dynamic Graphs},
    author = {Gravina, Alessio and Lovisotto, Giulio and Gallicchio, Claudio and Bacciu, Davide and Grohnfeldt, Claas},
    booktitle = {Proceedings of the 41st International Conference on Machine Learning},
    pages = {16206--16225},
    year = {2024},
    editor = {Salakhutdinov, Ruslan and Kolter, Zico and Heller, Katherine and Weller, Adrian and Oliver, Nuria and Scarlett, Jonathan and Berkenkamp, Felix},
    volume = {235},
    series = {Proceedings of Machine Learning Research},
    month = {21--27 Jul},
    publisher = {PMLR},
    pdf = {https://raw.githubusercontent.com/mlresearch/v235/main/assets/gravina24a/gravina24a.pdf},
    url = {https://proceedings.mlr.press/v235/gravina24a.html},
}

@inproceedings{choi2022STGNCDE,
  title={Graph Neural Controlled Differential Equations for Traffic Forecasting},
  author={Jeongwhan Choi AND Hwangyong Choi AND Jeehyun Hwang AND Noseong Park},
  booktitle={AAAI},
  year={2022}
}

@inproceedings{ GMAN-AAAI2020,
  author     = "Chuanpan Zheng and Xiaoliang Fan and Cheng Wang and Jianzhong Qi",
  title      = "GMAN: A Graph Multi-Attention Network for Traffic Prediction",
  booktitle  = "AAAI",
  pages      = "1234--1241",
  year       = "2020"
}

@article{bai2020adaptive,
  title={Adaptive graph convolutional recurrent network for traffic forecasting},
  author={Bai, Lei and Yao, Lina and Li, Can and Wang, Xianzhi and Wang, Can},
  journal={Advances in neural information processing systems},
  volume={33},
  pages={17804--17815},
  year={2020}
}

@article{kingma2014adam,
  title={Adam: A method for stochastic optimization},
  author={Kingma, Diederik P and Ba, Jimmy},
  journal={arXiv preprint arXiv:1412.6980},
  year={2014}
}

@inproceedings{wan2025rethink,
  title={Rethink graphode generalization within coupled dynamical system},
  author={Wan, Guancheng and Huang, Zijie and Zhao, Wanjia and Luo, Xiao and Sun, Yizhou and Wang, Wei},
  booktitle={Forty-second International Conference on Machine Learning},
  year={2025}
}

@inproceedings{yang2021discrete,
  title={Discrete-time temporal network embedding via implicit hierarchical learning in hyperbolic space},
  author={Yang, Menglin and Zhou, Min and Kalander, Marcus and Huang, Zengfeng and King, Irwin},
  booktitle={Proceedings of the 27th ACM SIGKDD conference on knowledge discovery \& data mining},
  pages={1975--1985},
  year={2021}
}

@article{kratsios2022universal-a,
  title={Universal approximation theorems for differentiable geometric deep learning},
  author={Kratsios, Anastasis and Papon, L{\'e}onie},
  journal={Journal of Machine Learning Research},
  volume={23},
  number={196},
  pages={1--73},
  year={2022}
}

@inproceedings{kratsios2022universal-b,
title={Universal Approximation Under Constraints is Possible with Transformers},
author={Anastasis Kratsios and Behnoosh Zamanlooy and Tianlin Liu and Ivan Dokmani{\'c}},
booktitle={International Conference on Learning Representations},
year={2022},
url={https://openreview.net/forum?id=JGO8CvG5S9}
}

@article{hairer2006structure,
  title={Structure-preserving algorithms for ordinary differential equations},
  author={Hairer, Ernst and Lubich, Christian and Wanner, Gerhard},
  journal={Geometric numerical integration},
  volume={31},
  year={2006},
  publisher={Springer-Verlag Berlin}
}

@article{stern2009implicit,
  title={Implicit-explicit variational integration of highly oscillatory problems},
  author={Stern, Ari and Grinspun, Eitan},
  journal={Multiscale Modeling \& Simulation},
  volume={7},
  number={4},
  pages={1779--1794},
  year={2009},
  publisher={SIAM}
}

@book{leimkuhler2004simulating,
  title={Simulating hamiltonian dynamics},
  author={Leimkuhler, Benedict and Reich, Sebastian},
  number={14},
  year={2004},
  publisher={Cambridge university press}
}

@inproceedings{guo2019attention,
  title={Attention based spatial-temporal graph convolutional networks for traffic flow forecasting},
  author={Guo, Shengnan and Lin, Youfang and Feng, Ning and Song, Chao and Wan, Huaiyu},
  booktitle={Proceedings of the AAAI conference on artificial intelligence},
  volume={33},
  number={01},
  pages={922--929},
  year={2019}
}

@article{benson2018simplicial,
  title={Simplicial closure and higher-order link prediction},
  author={Benson, Austin R and Abebe, Rediet and Schaub, Michael T and Jadbabaie, Ali and Kleinberg, Jon},
  journal={Proceedings of the National Academy of Sciences},
  volume={115},
  number={48},
  pages={E11221--E11230},
  year={2018},
  publisher={National Academy of Sciences}
}

@misc{xu2020inductive,
      title={Inductive Representation Learning on Temporal Graphs}, 
      author={Da Xu and Chuanwei Ruan and Evren Korpeoglu and Sushant Kumar and Kannan Achan},
      year={2020},
      eprint={2002.07962},
      archivePrefix={arXiv},
      primaryClass={cs.LG},
      url={https://arxiv.org/abs/2002.07962}, 
}

@inproceedings{hasaniliquid,
  title={Liquid Structural State-Space Models},
  author={Hasani, Ramin and Lechner, Mathias and Wang, Tsun-Hsuan and Chahine, Makram and Amini, Alexander and Rus, Daniela},
  booktitle={The Eleventh International Conference on Learning Representations}
}

\appendix

\setcounter{equation}{0} 
\renewcommand{\theequation}{S\arabic{equation}}

\section{Basic Geometries}\label{Basic_Geo}
\emph{Hyperbolic (Lorentz) model.} For curvature $\kappa=-1/K<0$ with $K>0$, set $\mathbb H_K^n=\{x\in\mathbb R^{n+1}:\langle x,x\rangle_{\mathcal L}=-K,\,x_0>0\}$ with Minkowski product $\langle x,y\rangle_{\mathcal L}=-x_0y_0+\sum_{i=1}^n x_i y_i$ and origin $o=(\sqrt K,0,\ldots,0)$. Tangent spaces are $T_x\mathbb H_K^n=\{v:\langle x,v\rangle_{\mathcal L}=0\}$, and the ambient-to-tangent projection is $\mathrm{Proj}_{T_x \mathbb H_K^n}(u)=u+\frac{\langle x,u\rangle_{\mathcal L}}{K}\,x.$ The geodesic distance satisfies $d_{\mathcal L}^K(x,y)=\sqrt K\,\operatorname{arcosh}\!\Big(-\frac{\langle x,y\rangle_{\mathcal L}}{K}\Big),$ and with $\|v\|_{\mathcal L}=\sqrt{\langle v,v\rangle_{\mathcal L}}$ the exponential/logarithmic maps are $\exp_x^K(v)=\cosh\!\Big(\frac{\|v\|_{\mathcal L}}{\sqrt K}\Big)x+\sqrt K\,\sinh\!\Big(\frac{\|v\|_{\mathcal L}}{\sqrt K}\Big)\frac{v}{\|v\|_{\mathcal L}},\quad \log_x^K(y)=d_{\mathcal L}^K(x,y)\,\frac{\,y+\frac{\langle x,y\rangle_{\mathcal L}}{K}x\,}{\big\|y+\frac{\langle x,y\rangle_{\mathcal L}}{K}x\big\|_{\mathcal L}},$ and parallel transport along the minimizing geodesic is the closed form $P_{x\to y}(v)=v+\frac{\langle y,v\rangle_{\mathcal L}}{\,K-\langle x,y\rangle_{\mathcal L}\,}\,(x+y).$

\emph{Sphere.} For radius $R>0$, $\mathbb S_R^n=\{p\in\mathbb R^{n+1}:\|p\|^2=R^2\}$ with Euclidean product, origin $o=(R,0,\ldots,0)$, $T_p\mathbb S_R^n=\{u:\langle p,u\rangle=0\}$, projection $\mathrm{Proj}_{T_p \mathbb S_R^n}(u)=u-\frac{\langle p,u\rangle}{R^2}\,p,$ distance $d^R(p,q)=R\,\arccos\!\Big(\frac{\langle p,q\rangle}{R^2}\Big),$ maps $\exp_p(v)=\cos\!\Big(\frac{\|v\|}{R}\Big)p+R\sin\!\Big(\frac{\|v\|}{R}\Big)\frac{v}{\|v\|}$, $ \log_p(q)=\frac{\theta}{\sin\theta}\Big(q-\frac{\langle p,q\rangle}{R^2}p\Big)$, $\theta=\arccos\!\Big(\frac{\langle p,q\rangle}{R^2}\Big),$ and transport $P_{p\to q}(u)=u-\frac{\langle q,u\rangle}{\,R^2+\langle p,q\rangle\,}\,(p+q).$

\emph{Euclidean space.} For $\mathbb E^n$ with origin $0$, we use $d(x,y)=\|y-x\|,\quad \exp_x(v)=x+v,\quad \log_x(y)=y-x,\quad P_{x\to y}(u)=u.$

\section{Proof of Theorem 1}\label{pf:solver_error}
\begin{proof}
Let $F=A+B$ and define the GD-ODE step by Lie--Trotter splitting
$\Psi_{\Delta t}=\Psi^A_{\Delta t}\circ\Psi^B_{\Delta t}$, where
$\Psi^B_{\Delta t}(x)=\mathrm{Exp}_x(\Delta t\,B(x))$ and
$\Psi^A_{\Delta t}(y)=\mathrm{Exp}_o\!\big(e^{-\Delta t C_n}\odot\mathrm{Log}_o(y)\big)$ with
$C_n=1/\boldsymbol{\tau}+f(x_n)$ frozen over the step. Denote the exact flow by $\Phi^{A+B}_{\Delta t}$.
By the triangle inequality,
\begin{equation}
d\big(\Psi_{\Delta t},\Phi^{A+B}_{\Delta t}\big)\le
d\big(\Psi^A_{\Delta t}\!\circ\Psi^B_{\Delta t},\Phi^A_{\Delta t}\!\circ\Phi^B_{\Delta t}\big)
+d\big(\Phi^A_{\Delta t}\!\circ\Phi^B_{\Delta t},\Phi^{A+B}_{\Delta t}\big).
\end{equation}
The Lie--Trotter/BCH expansion on a Riemannian manifold gives
\begin{equation}
\mathrm{Log}_{\Phi^{A+B}_{\Delta t}(x)}\!\big(\Phi^A_{\Delta t}\!\circ\Phi^B_{\Delta t}(x)\big)
= -\tfrac{(\Delta t)^2}{2}\,[A,B]_{\mathrm{Lie}}(x)+O((\Delta t)^3),
\end{equation}
so the magnitude is $O((\Delta t)^2)$ with the principal contribution given by the Lie bracket.

For $\dot{x}=B(x)$, the explicit Riemannian Euler
$\Psi^B_{\Delta t}(x)=\mathrm{Exp}_x(\Delta t B(x))$ has local error $O((\Delta t)^2)$.
For $\dot{x}=A_n(x)=C_n\odot\mathrm{Log}_x(o)$, the decay step $\Psi^A_{\Delta t}$
is first-order consistent: on constant-curvature factors the differential of
$\mathrm{Exp}$ in the radial direction equals parallel transport and
$\mathrm{Log}_{y}(o)=-P_{o\to y}(\mathrm{Log}_o(y))$, so the first-order terms coincide and the discrepancy starts at $O((\Delta t)^2)$. Hence
\begin{equation}
d\big(\Psi^A_{\Delta t}\!\circ\Psi^B_{\Delta t},\Phi^A_{\Delta t}\!\circ\Phi^B_{\Delta t}\big)=O((\Delta t)^2).
\end{equation}
The local error vector reads
\begin{equation}
E_{\Delta t}
=\tfrac{(\Delta t)^2}{2}\,[A,B]_{\mathrm{Lie}}(x_n)
+\mathcal{E}_{\mathrm{geom}}(\Delta t^2)+O((\Delta t)^3),
\end{equation}
where $\mathcal{E}_{\mathrm{geom}}$ collects $O((\Delta t)^2)$ geometric terms from base-point mismatch of $\mathrm{Log}/\mathrm{Exp}$ and curvature. Summing $N=T/\Delta t$ steps yields a global error $O(\Delta t)$; therefore GD-ODE is first-order convergent. \qedhere
\end{proof}

\section{Proof of Theorem 2}\label{pf:tau}
\begin{proof}
Assume each factor manifold $\mathcal M_i$ is Hadamard (complete, simply connected, nonpositive curvature). Then $\exp,\log$ are globally well-defined, $d_i^2(\cdot,o_i)$ is smooth, and Levi-Civita parallel transport $P_{o_i\to x_i}$ is an isometry. All operations below are taken component-wise.

We consider the component dynamics
\begin{equation}\label{eq:dynamics}
\dot x_i(t)=\Big[\tfrac{1}{\boldsymbol{\tau}_i}+f_i(\cdot)\Big]\log_{x_i(t)}(o_i)\;+\;f_i(\cdot)\,P_{o_i\to x_i(t)}(V_i),
\end{equation}
with $f_i(\cdot)\in[0,1]$ and $V_i\in T_{o_i}\mathcal M_i$ fixed. We also assume time has been non-dimensionalized so that $f_i$ has units of a rate (equivalently, any constant gain has been absorbed into $f_i$).

Let $\psi_i(x):=\tfrac12 d_i^2(x,o_i)$ and $r_i(t):=d_i(x_i(t),o_i)=\|\log_{x_i(t)}(o_i)\|_i$. On a Hadamard manifold $\nabla_{x}\psi_i(x)=-\log_{x}(o_i)$, hence by the chain rule and \eqref{eq:dynamics},
\begin{equation}
\begin{aligned}
\dot\psi_i(t) 
&=\big\langle\nabla\psi_i(x_i),\dot x_i\big\rangle_i \\
&=-\Big(\tfrac{1}{\boldsymbol{\tau}_i}+f_i\Big)\,\|\log_{x_i}(o_i)\|_i^2
\;-\;f_i\,\big\langle \log_{x_i}(o_i),\,P_{o_i\to x_i}(V_i)\big\rangle_i.
\end{aligned}
\end{equation}
Using Cauchy--Schwarz and the isometry of parallel transport,
$\big|\!\langle \log_{x_i}(o_i),P_{o_i\to x_i}(V_i)\rangle_i\!\big|
\le r_i(t)\,\|V_i\|_i$, we obtain the differential inequality
\begin{equation}\label{eq:psi-ineq}
\dot\psi_i(t)\;\le\;-\alpha_i(t)\,r_i(t)^2\;+\;f_i(t)\,A_i\,r_i(t),
\; \alpha_i(t):=\tfrac{1}{\boldsymbol{\tau}_i}+f_i(t),\;\; A_i:=\|V_i\|_i.
\end{equation}
For $r_i(t)>0$, $r_i=\sqrt{2\psi_i}$ implies
\begin{equation}\label{eq:r-ineq}
\dot r_i(t)\;=\;\frac{\dot\psi_i(t)}{r_i(t)}\;\le\;-\alpha_i(t)\,r_i(t)+f_i(t)\,A_i.
\end{equation}
At $r_i(t)=0$, \eqref{eq:r-ineq} holds for the upper Dini derivative (and trivially by continuity). The comparison lemma for scalar differential inequalities then yields exponential contraction of $r_i$ toward the (possibly time-varying) equilibrium radius $r_i^\star(t):=\tfrac{f_i(t)A_i}{\alpha_i(t)}$ with instantaneous rate $\alpha_i(t)$. Therefore it is natural to define the (component-wise, instantaneous) system time constant as
\begin{equation}
(\boldsymbol{\tau}_{\mathrm{sys}})_i(t):=\alpha_i(t)^{-1}=\frac{1}{\tfrac{1}{\boldsymbol{\tau}_i}+f_i(t)}
=\frac{\boldsymbol{\tau}_i}{1+\boldsymbol{\tau}_i f_i(t)}.
\end{equation}

Since $f_i(t)\in[0,1]$ for all $t$, the map $f\mapsto\boldsymbol{\tau}_i/(1+\boldsymbol{\tau}_i f)$ is decreasing on $[0,1]$ and we obtain the sharp bounds, holding for all times and states,
\begin{equation}
\frac{\boldsymbol{\tau}_i}{1+\boldsymbol{\tau}_i}\;\le\;(\boldsymbol{\tau}_{\mathrm{sys}})_i(t)\;\le\;\tau_i.
\end{equation}

This proves the claim.
\end{proof}

\section{Proof of Theorem 3}\label{pf:state}
\begin{proof}
Assume each factor manifold $\mathcal M_i$ is Hadamard (complete, simply connected, $\mathrm{sec}\le 0$), so that $\exp,\log$ are globally well-defined, $d_i^2(\cdot,o_i)$ is smooth, and Levi-Civita parallel transport $P_{o_i\to x_i}$ is an isometry. All operations are taken component-wise. Time has been non-dimensionalized so that rates add:
\begin{equation}\label{eq:dynamics-T3}
\dot x_i(t)=\Big[\tfrac{1}{\boldsymbol{\tau}_i}+f_i(t)\Big]\log_{x_i(t)}(o_i)\;+\;f_i(t)\,P_{o_i\to x_i(t)}(V_i),
\end{equation}

where $f_i(\cdot)\in[0,1]$ and $V_i\in T_{o_i}\mathcal M_i$. Let $\psi_i(x):=\tfrac12 d_i^2(x,o_i)$ and $r_i(t):=d_i(x_i(t),o_i)=\|\log_{x_i(t)}(o_i)\|_i$. On a Hadamard manifold, $\nabla\psi_i(x)=-\log_x(o_i)$, hence by the chain rule and \eqref{eq:dynamics-T3},
\begin{equation}
\begin{aligned}
\dot\psi_i(t)=\big\langle\nabla\psi_i(x_i),\dot x_i\big\rangle_i
=&-\Big(\tfrac{1}{\boldsymbol{\tau}_i}+f_i\Big)\|\log_{x_i}(o_i)\|_i^2 \\
&-f_i\,\big\langle \log_{x_i}(o_i),\,P_{o_i\to x_i}(V_i)\big\rangle_i.
\end{aligned}
\end{equation}

By Cauchy--Schwarz and the isometry of parallel transport,
$\big|\!\langle \log_{x_i}(o_i),P_{o_i\to x_i}(V_i)\rangle_i\!\big|
\le r_i(t)\,\|V_i\|_i$. Writing $A_i:=\|V_i\|_i$ and $\alpha_i(t):=\tfrac{1}{\boldsymbol{\tau}_i}+f_i(t)$, we obtain the differential inequality
\begin{equation}\label{eq:ri-ineq}
\dot r_i(t)=\frac{\dot\psi_i(t)}{r_i(t)}\;\le\;-\alpha_i(t)\,r_i(t)+f_i(t)\,A_i,
\end{equation}
where $r_i(t)>0$ and the upper Dini derivative at $r_i=0$. Since $f_i(t)\in[0,1]$, we have $\alpha_i(t)\ge 1/\boldsymbol{\tau}_i$ and $f_i(t)A_i\le A_i$, so by comparison
\begin{equation}\label{eq:comparison}
\dot r_i(t)\;\le\;-\tfrac{1}{\boldsymbol{\tau}_i}\,r_i(t)+A_i.
\end{equation}
Solving the linear comparison system yields, for all $t\ge 0$,
\begin{equation}\label{eq:explicit-bound}
r_i(t)\;\le\;e^{-t/\boldsymbol{\tau}_i}\,r_i(0)\;+\;\boldsymbol{\tau}_i A_i\big(1-e^{-t/\boldsymbol{\tau}_i}\big)
\;\le\;\max\{\,r_i(0),\;\boldsymbol{\tau}_i A_i\,\}.
\end{equation}

Define the (component-wise) invariant radii $R_i:=\boldsymbol{\tau}_i\|V_i\|_i$. From \eqref{eq:explicit-bound} we obtain:
(i) the closed ball $B_i:=\{x\in\mathcal M_i:\,d_i(x,o_i)\le R_i\}$ is forward invariant (if $r_i(0)\le R_i$ then $r_i(t)\le R_i$ for all $t$);
(ii) in general $r_i(t)\le \max\{r_i(0),R_i\}$ for all $t$, and $r_i(t)\to r_i^\star(t):=\frac{f_i(t)A_i}{\alpha_i(t)}\in[0,R_i]$ with instantaneous exponential rate $\alpha_i(t)$.
Passing to the product manifold with product metric, the set:$\mathcal B:=\prod_{i=1}^k B_i,$ which is forward invariant, and every trajectory is uniformly bounded:
\begin{equation}
d\big(x(t),o\big)\;\le\;\Big(\sum_{i=1}^k \max\{r_i(0),R_i\}^2\Big)^{1/2}\!.
\end{equation}
In particular, if $x(0)\in\mathcal B\text{ then }d\big(x(t),o\big)\le
\Big(\sum_{i=1}^k R_i^2\Big)^{1/2}\!$, proves boundedness and forward invariance as claimed.
\end{proof}

\section{Proof of Theorem 4}\label{appendix:UAT}
We consider autonomous dynamics ($I\equiv 0$).

\subsection{Preparatory Lemmas}
\begin{lemma}
For any smooth curve $\gamma$ and tangent vectors $a,b\in T_{\gamma(0)}\mathbb P$, the Levi--Civita parallel transport $P_{\gamma(0)\to\gamma(1)}$ is linear:
$P(a+b)=P(a)+P(b)$.
\end{lemma}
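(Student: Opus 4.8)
The plan is to work directly from the defining ODE of Levi--Civita parallel transport and to exploit the fact that this equation is linear in the transported vector. Recall that for a vector $v\in T_{\gamma(0)}\mathbb P$, the parallel transport $P_{\gamma(0)\to\gamma(1)}(v)$ is obtained as $V(1)$, where $V(t)$ is the unique vector field along $\gamma$ satisfying the parallel transport equation $\nabla_{\dot\gamma(t)}V(t)=0$ with initial condition $V(0)=v$. In a local frame this reads componentwise as $\dot V^{k}(t)+\Gamma^{k}_{ij}(\gamma(t))\,\dot\gamma^{i}(t)\,V^{j}(t)=0$, a homogeneous linear first-order ODE system in the components $V^{j}$; existence and uniqueness of its solution for each initial datum follow from the standard Picard--Lindel\"of theorem, since the coefficients are smooth along the compact parameter interval.

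First I would let $A(t)$ and $B(t)$ denote the parallel fields along $\gamma$ with $A(0)=a$ and $B(0)=b$, so that $P(a)=A(1)$ and $P(b)=B(1)$. Then I would form the sum $A(t)+B(t)$ and use the $\mathbb R$-linearity of the connection in its differentiated argument: since $\nabla_{\dot\gamma}(A+B)=\nabla_{\dot\gamma}A+\nabla_{\dot\gamma}B=0$, the field $A+B$ is itself parallel along $\gamma$. Its initial value is $(A+B)(0)=a+b$, so by the uniqueness of solutions to the transport equation it must coincide with the parallel transport of $a+b$. Evaluating at $t=1$ then gives $P(a+b)=(A+B)(1)=A(1)+B(1)=P(a)+P(b)$.

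Homogeneity follows by the same mechanism: for a scalar $\lambda$, the field $\lambda A(t)$ satisfies $\nabla_{\dot\gamma}(\lambda A)=\lambda\nabla_{\dot\gamma}A=0$ and has initial value $\lambda a$, so uniqueness yields $P(\lambda a)=\lambda P(a)$; combined with additivity this establishes full linearity. On the product manifold $\mathbb P=\mathcal M_1\times\cdots\times\mathcal M_k$ one could alternatively invoke the componentwise decomposition of parallel transport recorded in the Preliminaries and reduce the claim to each factor, but the connection-based argument above applies uniformly and avoids the case split. The single point deserving care is the appeal to uniqueness: it is precisely what upgrades the statement ``$A+B$ is \emph{a} parallel field starting at $a+b$'' to ``$A+B$ is \emph{the} parallel transport of $a+b$,'' and this hinges on the transport equation being linear with continuous coefficients, so I expect the verification of uniqueness (rather than the algebra) to be the only substantive step.
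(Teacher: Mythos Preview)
Your proposal is correct and follows exactly the paper's approach: the paper's one-line proof simply notes that parallel transport solves the linear ODE $\nabla_{\dot\gamma}X=0$ with linear initial data, and you have spelled this out carefully by invoking uniqueness to identify $A+B$ with the transport of $a+b$. The additional remarks on homogeneity and on the product structure are fine but go beyond what the paper records.
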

\begin{proof}
Parallel transport solves a linear ODE $\nabla_{\dot\gamma}X=0$ with linear initial data; linearity follows.
\end{proof}

\begin{lemma}
For any $o,x\in\mathbb P$,
\begin{equation}
-P_{o\to x}\big(\mathrm{Log}_{o}(x)\big)=\mathrm{Log}_{x}(o).
\end{equation}

\end{lemma}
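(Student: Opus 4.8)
The plan is to prove the identity $-P_{o\to x}(\mathrm{Log}_o(x)) = \mathrm{Log}_x(o)$ by exploiting the relationship between the two minimizing geodesics connecting $o$ and $x$, together with the fact that parallel transport preserves tangent vectors along a geodesic up to the reversal caused by traversing it in the opposite direction. Because the product operations act component-wise (by the preliminaries), it suffices to verify the statement on a single factor manifold $\mathcal M_i$; I will drop the subscript and argue on one Hadamard factor where $\exp$ and $\log$ are global diffeomorphisms.

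The geometric idea is as follows. Let $\gamma:[0,1]\to\mathcal M$ be the unique minimizing geodesic with $\gamma(0)=o$ and $\gamma(1)=x$, so that $\dot\gamma(0)=\mathrm{Log}_o(x)\in T_o\mathcal M$ and $\dot\gamma(1)=-\mathrm{Log}_x(o)\in T_x\mathcal M$, the latter because the reverse geodesic from $x$ to $o$ has initial velocity $\mathrm{Log}_x(o)$, which is $-\dot\gamma(1)$. The key fact I would invoke is that the velocity field of a geodesic is parallel along itself: $\nabla_{\dot\gamma}\dot\gamma = 0$. This means $\dot\gamma(t)$ is exactly the parallel transport of $\dot\gamma(0)$ along $\gamma$, so
\begin{equation}
P_{o\to x}\big(\dot\gamma(0)\big) = \dot\gamma(1).
\end{equation}

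The remaining steps are to substitute the two boundary-velocity identifications into this equation. First I would write $P_{o\to x}(\mathrm{Log}_o(x)) = P_{o\to x}(\dot\gamma(0)) = \dot\gamma(1)$. Then, using $\dot\gamma(1) = -\mathrm{Log}_x(o)$, I conclude $P_{o\to x}(\mathrm{Log}_o(x)) = -\mathrm{Log}_x(o)$, which rearranges immediately to the claimed identity. On the product manifold $\mathbb P$, the factorwise decomposition of both $\mathrm{Log}$ and $P$ stated in the preliminaries lets me assemble the component identities into the product identity without further work.

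The main obstacle is not the calculation, which is short, but making precise the claim that $\dot\gamma(1) = -\mathrm{Log}_x(o)$ and that parallel transport of the geodesic velocity coincides with the maps $\exp,\log$ as defined on each factor. On a Hadamard manifold uniqueness of minimizing geodesics removes any ambiguity, and the constant-curvature closed forms in the appendix (for $\mathbb H^n$ and $\mathbb S^n$) can be used as a sanity check that the parallel-transport formula $P_{x\to y}$ indeed sends the radial logarithm to the negative radial logarithm at the other endpoint. I expect the cleanest route to avoid coordinate computation entirely and rest solely on the geodesic-velocity-is-parallel property, invoking Lemma 5 (linearity of $P$) only implicitly; the one thing to state carefully is the sign convention relating $\dot\gamma(1)$ to $\mathrm{Log}_x(o)$.
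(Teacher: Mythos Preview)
Your proposal is correct and follows essentially the same argument as the paper: both use that the velocity field of the minimizing geodesic from $o$ to $x$ is parallel along itself, so transporting $\dot\gamma(0)=\mathrm{Log}_o(x)$ yields $\dot\gamma(1)$, and reversing the geodesic identifies $\dot\gamma(1)=-\mathrm{Log}_x(o)$. Your version simply spells out the sign convention and the product-manifold reduction more explicitly than the paper's two-line proof.
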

\begin{proof}
Let $\gamma$ be the minimizing geodesic with $\gamma(0)=o$, $\gamma(1)=x$ and initial velocity $v=\mathrm{Log}_{o}(x)$. Transporting $v$ along $\gamma$ gives the terminal velocity; reversing $\gamma$ flips the sign, hence the identity.
\end{proof}

\begin{lemma}
Let $F,\widetilde F$ be Lipschitz on a geodesically convex domain with Lipschitz constant $L$ for $F$. If $\|F-\widetilde F\|\le\varepsilon_\ell$, then the solutions $\mathbf u,\mathbf x$ to $\dot{\mathbf u}=F(\mathbf u)$ and $\dot{\mathbf x}=\widetilde F(\mathbf x)$ with the same initial state satisfy
\begin{equation}
d(\mathbf u(t),\mathbf x(t))\le \frac{\varepsilon_\ell}{L}\,(e^{L|t-t_0|}-1).
\end{equation}

\end{lemma}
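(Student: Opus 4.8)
The plan is to run the classical Gronwall argument, but with the Euclidean difference $\mathbf{u}-\mathbf{x}$ replaced by the geodesic distance $\rho(t):=d(\mathbf{u}(t),\mathbf{x}(t))$ and with parallel transport used to compare the two velocity vectors, which a priori live in different tangent spaces. First I would work in the Hadamard setting of the earlier proofs (or, more generally, assume the minimizing geodesic between $\mathbf{u}(t)$ and $\mathbf{x}(t)$ remains unique inside the geodesically convex domain); this guarantees $\rho$ is smooth wherever $\rho>0$ and licenses the first variation of arc length. Let $\gamma_t:[0,\rho(t)]\to\mathbb{P}$ be the unit-speed minimizing geodesic from $\mathbf{x}(t)$ to $\mathbf{u}(t)$. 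Then, at points where $\rho>0$, the first variation formula yields
\begin{equation}
\dot\rho(t)=\big\langle F(\mathbf{u}),\dot\gamma_t(\rho)\big\rangle-\big\langle\widetilde F(\mathbf{x}),\dot\gamma_t(0)\big\rangle .
\end{equation}

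Next I would insert the parallel transport $P:=P_{\mathbf{x}\to\mathbf{u}}$ along $\gamma_t$. Because $\dot\gamma_t$ is parallel along the geodesic and $P$ is an isometry, $\langle PF(\mathbf{x}),\dot\gamma_t(\rho)\rangle=\langle F(\mathbf{x}),\dot\gamma_t(0)\rangle$; adding and subtracting this term splits the right-hand side as
\begin{equation}
\dot\rho(t)=\big\langle F(\mathbf{u})-PF(\mathbf{x}),\dot\gamma_t(\rho)\big\rangle+\big\langle F(\mathbf{x})-\widetilde F(\mathbf{x}),\dot\gamma_t(0)\big\rangle .
\end{equation}
Since $\|\dot\gamma_t\|=1$, Cauchy--Schwarz bounds each inner product by the norm of its first slot, giving $\dot\rho\le\|F(\mathbf{u})-PF(\mathbf{x})\|+\|F(\mathbf{x})-\widetilde F(\mathbf{x})\|$. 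The second term is at most $\varepsilon_\ell$ by hypothesis; the first is at most $L\,d(\mathbf{u},\mathbf{x})=L\rho$, interpreting the Lipschitz constant of $F$ in the manifold sense (comparison of the values of $F$ at two points after parallel transport along the connecting geodesic). This reduces the problem to the scalar differential inequality $\dot\rho\le L\rho+\varepsilon_\ell$.

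Finally I would integrate this inequality. With the shared initial condition $\rho(t_0)=0$, the comparison lemma applied to $\dot\rho\le L\rho+\varepsilon_\ell$ gives $\rho(t)\le\frac{\varepsilon_\ell}{L}\bigl(e^{L|t-t_0|}-1\bigr)$, the absolute value arising by running the identical estimate on the time-reversed flow. The main obstacle is not the algebra but the non-smoothness of $\rho$ at coincidence points: at the isolated instants where $\rho(t)=0$ the distance is only directionally differentiable, so I would state the inequality for the upper Dini derivative $D^{+}\rho$, exactly as in the earlier stability proofs, and invoke the Dini-derivative form of the comparison lemma. The Hadamard/convexity assumption simultaneously rules out cut-locus crossings (ensuring $\gamma_t$ is well-defined and $\rho$ is otherwise smooth), so once these two technical points are handled the estimate follows.
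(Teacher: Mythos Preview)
Your argument is correct and is the standard way to establish this Riemannian Gronwall-type estimate: first variation of arc length, parallel transport to bring both velocities into a common tangent space, Cauchy--Schwarz, the Lipschitz bound on $F$ (read through parallel transport), and then the scalar comparison $\dot\rho\le L\rho+\varepsilon_\ell$. Your handling of the two genuine technicalities---using the upper Dini derivative at coincidence points and invoking geodesic convexity/Hadamard structure to avoid the cut locus---is also appropriate and matches how the paper treats analogous issues in the stability proofs.

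As for comparison: the paper does not actually supply a proof of this lemma. It is listed among the preparatory lemmas for the universal approximation theorem and is simply stated, treated as a known Riemannian analogue of the classical continuous-dependence estimate. So there is no alternative argument in the paper to contrast with yours; your sketch fills in exactly what the paper leaves implicit.
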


\begin{lemma}[Universal approximation, vector form]
If $H:\mathcal K\to\mathbb R^d$ is continuous on a compact $\mathcal K\subset\mathbb R^n$, then for any $\delta>0$ there exist gates $g_m:\mathcal K\to[0,1]$ and vectors $A^{(m)}\in\mathbb R^d$ such that
\begin{equation}
\sup_{v\in\mathcal K}\Big\|H(v)-\sum_{m=1}^M g_m(v)\,A^{(m)}\Big\|<\delta.
\end{equation}

\end{lemma}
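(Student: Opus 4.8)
The plan is to reduce the vector-valued statement to the classical scalar universal approximation theorem by treating one output coordinate at a time and then reassembling the pieces into a single gated sum. Write $H=(H_1,\dots,H_d)$, where each component $H_j:\mathcal K\to\mathbb R$ is continuous on the compact set $\mathcal K$. First I would invoke the scalar universal approximation theorem (Hornik/Cybenko, as cited in the paper) for each $H_j$ separately: with $\sigma$ the logistic sigmoid, whose range is $(0,1)\subset[0,1]$, there exist units $g_{m,j}(v)=\sigma(w_{m,j}^{\top}v+b_{m,j})$ and scalar weights $c_{m,j}$ with
\begin{equation}
\sup_{v\in\mathcal K}\Big|H_j(v)-\sum_{m=1}^{M_j} c_{m,j}\,g_{m,j}(v)\Big|<\frac{\delta}{\sqrt d}.
\end{equation}
Each $g_{m,j}$ is exactly a gate valued in $[0,1]$, and the real (possibly negative, unbounded) weight $c_{m,j}$ is what will be pushed into the output vectors.

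Next I would assemble the coordinate-wise approximants into one sum indexed by the combined set $\{(m,j)\}$. Defining the output vectors $A^{(m,j)}:=c_{m,j}\,e_j\in\mathbb R^d$, where $e_j$ is the $j$-th standard basis vector, the $j$-th coordinate of $\sum_{m,j} g_{m,j}(v)\,A^{(m,j)}$ equals precisely $\sum_m c_{m,j}\,g_{m,j}(v)$ while all cross-coordinate contributions vanish. Relabeling the finite family $\{(m,j)\}$ as $\{1,\dots,M\}$ with $M=\sum_{j=1}^{d} M_j$ then produces gates $g_m\in[0,1]$ and vectors $A^{(m)}$ in exactly the form required by the statement.

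Finally I would control the total error in the Euclidean norm by summing the per-coordinate bounds:
\begin{equation}
\Big\|H(v)-\sum_{m,j} g_{m,j}(v)\,A^{(m,j)}\Big\|^{2}
=\sum_{j=1}^{d}\Big|H_j(v)-\sum_m c_{m,j}\,g_{m,j}(v)\Big|^{2}
<\sum_{j=1}^{d}\frac{\delta^{2}}{d}=\delta^{2},
\end{equation}
uniformly in $v\in\mathcal K$, which yields the claimed bound after taking square roots. The construction is essentially mechanical, and the one point I would state most explicitly is the bookkeeping that confines every gate to $[0,1]$ while offloading the sign and magnitude of the linear weights into the vectors $A^{(m)}$; this is precisely why the logistic sigmoid is the natural choice of gate, and it is the only step where a careless formulation could violate the prescribed range of $g_m$.
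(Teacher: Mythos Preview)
The paper does not actually prove this lemma; it is listed among the preparatory lemmas in Appendix~\ref{appendix:UAT} and stated without proof, evidently regarded as a standard consequence of the classical scalar universal approximation theorem of Hornik/Cybenko. Your argument supplies exactly that missing derivation and is correct: the coordinate-wise reduction, the choice of sigmoidal gates so that each $g_{m,j}$ lands in $[0,1]$, the absorption of the signed weights $c_{m,j}$ into the output vectors $A^{(m,j)}=c_{m,j}e_j$, and the $\delta/\sqrt{d}$ per-coordinate budget all go through cleanly. This is the natural proof one would expect the authors had in mind, and there is nothing to compare against beyond that.
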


\subsection{Main Theorem}
\begin{theorem*}
Let $\mathbb P=\prod_{i=1}^k\mathcal M_i$ be a product of complete constant-curvature manifolds. Let $F:S\subset\mathbb P\to T\mathbb P$ be $C^1$, bounded, and $L$-Lipschitz on $S$. Suppose the solution $\mathbf u$ to $\dot{\mathbf u}=F(\mathbf u)$ with $\mathbf u(0)=\mathbf x_0$ is unique on $[0,T]$ and $\mathbf u([0,T])\subset D\Subset S$. Then, for any $\varepsilon>0$ and partition $0=t_0<\dots<t_N=T$, there exists an RLSTG with Geodesic-Decay (GD) solver generating $\{\mathbf x_k\}_{k=0}^N$ such that
\begin{equation}
\max_{0\le k\le N} d(\mathbf x_k,\mathbf u(t_k))<\varepsilon.
\end{equation}

\end{theorem*}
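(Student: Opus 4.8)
The plan is to reduce the manifold problem to a Euclidean one on the common tangent space $T_{\mathbf o}\mathbb P\cong\mathbb R^d$, invoke the vector-form universal approximation lemma there, and then control the two resulting error sources—field-approximation error and solver discretization error—by the Gronwall comparison estimate and the first-order convergence of the GD solver (Theorem 1). First I would fix $\mathrm{Log}_{\mathbf o}$ as a global chart on a compact set $D'$ with $D\Subset D'\Subset S$; on constant-curvature factors this is a diffeomorphism, so $\mathcal K:=\mathrm{Log}_{\mathbf o}(D')$ is compact and $H(z):=P_{x\to\mathbf o}F(x)$, with $x=\mathrm{Exp}_{\mathbf o}(z)$, is continuous on $\mathcal K$. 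Using the preparatory lemmas—linearity of parallel transport and the identity $-P_{\mathbf o\to x}(\mathrm{Log}_{\mathbf o}x)=\mathrm{Log}_x\mathbf o$—I would transport the RLSTG vector field to the origin. Since $P_{x\to\mathbf o}(\mathrm{Log}_x\mathbf o)=-z$ and $P_{x\to\mathbf o}(P_{\mathbf o\to x}\mathbf V)=\mathbf V$, a single unit realizes the origin-transported field $z\mapsto-\tfrac1{\boldsymbol\tau}\odot z+f(z)\odot(\mathbf V-z)$; and because $P_{x\to\mathbf o}$ is an isometry, matching transported fields is equivalent to matching the original fields in norm.

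Next I would realize $H$ with the RLSTG's gated structure. The vector-form universal approximation lemma furnishes gates $g_m\in[0,1]$ and vectors $A^{(m)}$ with $\sup_{z\in\mathcal K}\|H(z)-\sum_{m=1}^M g_m(z)A^{(m)}\|<\delta$. I would instantiate $M$ units whose gating MLP outputs $f^{(m)}=g_m\mathbf 1$—the same value in every coordinate of each factor, so that scaling commutes with the factor-wise parallel transport—and whose driving vectors are $\mathbf V^{(m)}=cA^{(m)}$ with gates correspondingly rescaled to $g_m/c$. The summed field is $\sum_m g_m(z)A^{(m)}-(\tfrac1c\sum_m g_m(z))z-\tfrac1{\boldsymbol\tau}\odot z$; taking $c$ and $\boldsymbol\tau$ large suppresses the two parasitic $z$-terms, leaving a realizable $\widetilde F$ with $\sup_{x\in D'}\|F(x)-\widetilde F(x)\|\le\varepsilon_\ell$ for any prescribed $\varepsilon_\ell$. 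Note that the sign and direction of each contribution are carried by $A^{(m)}$, so the nonnegativity of the sigmoid gates is no obstruction.

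I would then assemble the error budget. By the Gronwall comparison estimate, the exact trajectory $\widetilde{\mathbf x}(t)$ of $\dot{\mathbf x}=\widetilde F(\mathbf x)$ with $\widetilde{\mathbf x}(0)=\mathbf x_0$ obeys $d(\mathbf u(t),\widetilde{\mathbf x}(t))\le\tfrac{\varepsilon_\ell}{L}(e^{LT}-1)$, which I make $<\varepsilon/2$ by choosing $\varepsilon_\ell$ small (the constant $L$ is that of the target $F$, so it does not degrade as $\delta\to0$). By Theorem 1 the GD iterates satisfy $\max_k d(\mathbf x_k,\widetilde{\mathbf x}(t_k))\le C\,\Delta t$; since the statement fixes only the output times $t_k$, I would insert internal GD sub-steps (the unfold mechanism) so the effective step size, hence this term, is $<\varepsilon/2$. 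A triangle inequality then yields $\max_k d(\mathbf x_k,\mathbf u(t_k))<\varepsilon$. A supporting technical point is confinement: one argues inductively that for $\varepsilon_\ell$ and $\Delta t$ small, both $\widetilde{\mathbf x}$ and the iterates remain in $D'$, so every map and bound above stays valid.

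I expect the main obstacle to be the realization step: bridging the rigid RLSTG form—a single nonnegative gate coupling the leak and driving terms, together with factor-wise transport that does not commute with coordinate-wise gating—to the unconstrained gated sum of the approximation lemma. The resolution hinges on three coordinated choices: factor-constant gates so that scaling commutes with transport, all sign and direction carried by the driving vectors, and a large-driving/small-gate rescaling that annihilates the parasitic $-(\sum_m g_m)z$ coupling while $\boldsymbol\tau\to\infty$ removes the intrinsic leak. The remaining care is to verify that these choices keep $\widetilde F$ within the hypotheses needed for the Gronwall estimate and Theorem 1 uniformly, and that the approximating and discrete trajectories never leave $D'$ on $[0,T]$.
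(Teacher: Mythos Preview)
Your proposal is correct and follows essentially the same route as the paper: pull the target field back to $T_{\mathbf o}\mathbb P$ via $\mathrm{Log}_{\mathbf o}$ and parallel transport, approximate it there by the vector-form universal approximation lemma, realize the approximant as an RLSTG with large $\boldsymbol\tau$, and then combine the Gronwall field-error bound with the GD solver's first-order convergence by the triangle inequality, together with a domain-staying argument. You are in fact more careful than the paper about the realization step---the paper simply posits $\widetilde F(x)=-\tfrac{1}{\boldsymbol\tau}\mathrm{Log}_x(\mathbf o)+\sum_m g_m(\mathrm{Log}_{\mathbf o}x)\,P_{\mathbf o\to x}(A^{(m)})$ and calls it ``an RLSTG with a linear output layer'' without addressing the parasitic $-f\odot z$ coupling that the actual gated form introduces, whereas your large-$c$ rescaling and factor-constant gates explicitly suppress it.
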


\begin{proof}[Proof]
Let $\eta\in(0,1)$ be such that $\overline{D_\eta}\Subset S$, where $D_\eta:=\{x:\,d(x,D)<\eta\}$. Choose $o\in\mathbb P$ and (shrinking $\eta$ if needed) assume $\overline{D_\eta}$ lies in a normal neighborhood of $o$; thus $\mathrm{Log}_{o}$ and $\mathrm{Exp}_{o}$ are inverse diffeomorphisms between $\overline{D_\eta}$ and $\mathcal K:=\mathrm{Log}_{o}(\overline{D_\eta})\subset\mathbb R^d$, and the minimizing geodesic from $o$ to $x$ is unique.

Define a continuous map
\begin{equation}
H(v):=P_{\mathrm{Exp}_{o}(v)\to o}\!\big(F(\mathrm{Exp}_{o}(v))\big),\qquad v\in\mathcal K.
\end{equation}

Pick
$\varepsilon_\ell<\frac{L\,\min\{\varepsilon,\eta\}}{2\,(e^{LT}-1)}.$

By the vector UAT, there exist gates $g_m$ and vectors $A^{(m)}\in T_o\mathbb P\cong\mathbb R^d$ such that
\begin{equation}
\sup_{v\in\mathcal K}\Big\|H(v)-\sum_{m=1}^M g_m(v)\,A^{(m)}\Big\|<\frac{\varepsilon_\ell}{2}.
\end{equation}

Let $v=\mathrm{Log}_{o}(x)$ and set
\begin{equation}
\widetilde F(x):=-\frac{1}{\boldsymbol{\tau}}\,\mathrm{Log}_{x}(o)\;+\;\sum_{m=1}^M g_m\!\big(\mathrm{Log}_{o}(x)\big)\,P_{o\to x}\!\big(A^{(m)}\big),
\end{equation}

i.e., an RLSTG with a linear output layer $\sum_m g_m(\cdot)A^{(m)}$. Transporting to $T_o\mathbb P$ and using the two lemmas,
\begin{equation}
P_{x\to o}\widetilde F(x)=-\frac{1}{\boldsymbol{\tau}}\,\mathrm{Log}_{o}(x)+\sum_{m} g_m(v)\,A^{(m)}.
\end{equation}

Let $R:=\sup_{x\in D_\eta}\|\mathrm{Log}_{o}(x)\|<\infty$ and choose $\boldsymbol{\tau}$ so that $R/\boldsymbol{\tau}<\varepsilon_\ell/2$. Then, for all $x\in D_\eta$,
\begin{equation}
\|F(x)-\widetilde F(x)\|_{x}=\|P_{x\to o}F(x)-P_{x\to o}\widetilde F(x)\|
<\frac{\varepsilon_\ell}{2}+\frac{\varepsilon_\ell}{2}=\varepsilon_\ell.
\end{equation}

Smoothness on the compact $\overline{D_\eta}$ implies $\widetilde F$ is Lipschitz there.

Let $\mathbf x$ solve $\dot{\mathbf x}=\widetilde F(\mathbf x)$ with $\mathbf x(0)=\mathbf x_0$. By the Riemannian Gronwall lemma on the geodesically convex $D_\eta$,
\begin{equation}
d(\mathbf u(t),\mathbf x(t))\le \frac{\varepsilon_\ell}{L}\,(e^{Lt}-1)\le \frac{\min\{\varepsilon,\eta\}}{2}\quad (0\le t\le T).
\end{equation}

Hence $\mathbf x([0,T])\subset D_\eta$ (domain staying), closing the bootstrap.

The first-order GD solver applied to $\dot{\mathbf x}=\widetilde F(\mathbf x)$ yields
\begin{equation}
\max_{0\le k\le N} d\big(\mathbf x_k,\mathbf x(t_k)\big)\le C\,\Delta t_{\max},
\end{equation}

with $C$ depending on $T$, Lipschitz bounds of $\widetilde F$, and geometry on $D_\eta$. Choose the mesh so that $C\,\Delta t_{\max}<\min\{\varepsilon,\eta\}/2$. Then
\begin{equation}
\begin{aligned}
d(\mathbf x_k,\mathbf u(t_k)) &\le d(\mathbf x_k,\mathbf x(t_k))+d(\mathbf x(t_k),\mathbf u(t_k)) \\
&<\frac{\min\{\varepsilon,\eta\}}{2}+\frac{\min\{\varepsilon,\eta\}}{2}\le \varepsilon,
\end{aligned}
\end{equation}

for all $k$, as claimed.
\end{proof}

\section{Other ODE Solver on Manifolds}\label{appendix:other_ode}

\subsection{Euler and RK4 on Manifolds}
We integrate the ODE on a Riemannian manifold $\mathcal M$,
$\dot x(t)=F(x(t))\in T_{x(t)}\mathcal M$,
assuming a normal neighborhood so that $\mathrm{Exp}_x$ and the Levi–Civita parallel transport $P_{y\to x}:T_y\mathcal M\to T_x\mathcal M$ are well-defined along the minimizing geodesic. (On product manifolds, apply component-wise.)

\paragraph{Explicit Euler (first order).}
With step size $h>0$,
$x_{k+1}\;=\;\mathrm{Exp}_{x_k}\!\big(h\,F(x_k)\big).$

\paragraph{Geometric RK4 (fourth order).}
All vector sums are taken in $T_{x_k}\mathcal M$; intermediate evaluations are transported back to $T_{x_k}\mathcal M$:
\begin{align}
\alpha_1 &:= F(x_k),\\
x_k^{(2)} &:= \mathrm{Exp}_{x_k}\!\Big(\tfrac{h}{2}\,\alpha_1\Big),\quad
\alpha_2 := P_{x_k^{(2)}\to x_k}\big(F(x_k^{(2)})\big),\\
x_k^{(3)} &:= \mathrm{Exp}_{x_k}\!\Big(\tfrac{h}{2}\,\alpha_2\Big),\quad
\alpha_3 := P_{x_k^{(3)}\to x_k}\big(F(x_k^{(3)})\big),\\
x_k^{(4)} &:= \mathrm{Exp}_{x_k}\!\big(h\,\alpha_3\big),\quad
\alpha_4 := P_{x_k^{(4)}\to x_k}\big(F(x_k^{(4)})\big),\\[2pt]
x_{k+1} &:= \mathrm{Exp}_{x_k}\!\Big(\tfrac{h}{6}\big(\alpha_1+2\alpha_2+2\alpha_3+\alpha_4\big)\Big).
\end{align}
This respects manifold structure by: (i) evaluating $F$ at displaced points via $\mathrm{Exp}$, (ii) transporting all stage vectors to the common space $T_{x_k}\mathcal M$ before averaging.

\section{Experiment Setting}\label{appendix:experiment}


\subsection{Hyperparameters and Parameter Counts}

We tuned the hyperparameters based on the model's performance on the validation set. For each hyperparameter, we performed a grid search over the ranges specified in Table \ref{tab:hyperparameters} and selected the value that yielded the best validation metric (i.e., the lowest MAE for regression tasks and the highest AP for link prediction tasks on validation set). The final optimal values are noted in the table.

\begin{table}[h]
\centering
\small
\caption{Hyperparameters used for the experimental evaluations}
\label{tab:hyperparameters}
\begin{tabular}{m{2cm}|c|m{4cm}} 
\toprule
\textbf{Parameter} & \textbf{Value} & \textbf{Description} \\
\midrule
Number of hidden units & 16 & Total node-embedding dimension.\\ 
Hyperboloid curvature & 1e-8 - 1.0 & Best: 1e-6. \\
Sphere curvature & 1.0 & \\
Batch size & 200 & \\
Learning rate & 1e-4 - 1e-2 & Best: 1e-3. \\
ODE-solver step & 1/4 & Unfold = 4. \\
Optimizer & Adam  & \cite{kingma2014adam} \\
Weight Decay & 1e-8 - 1e-1 & Adam parameter; Best: 1e-6.\\
Dropout & 0 - 0.2 & Best: 0. \\
BPTT length & 1 & BPTT steps. \\
Validation evaluation interval & 1 & Validate every x epochs. \\
Training epochs & 500 & \\
Seed & 42 & Fixed in search; random across 5 runs.\\
\bottomrule
\end{tabular}
\end{table}

\subsection{Computational Cost Analysis}

\begin{table}[h]
\centering
\caption{Compute cost vs. baselines on METR-LA.}
\label{tab:computation_comparison}
\resizebox{\columnwidth}{!}{%
\begin{tabular}{l c r r r r}
\toprule
Method & Manifold & Mem (MB) & Time/ep (s) & Params (k) & FLOPs/pass (M) \\
\midrule
STGODE & $\mathbb{E}^{16}$ & 94.3 & 40.0 & 1.651 & 0.437 \\
\midrule
\multirow{7}{*}{\textbf{RLSTG}} 
& $\mathbb{E}^{17}$                   & 171.3 & 15.1 & 2.228 & 3.120 \\
& $\mathbb{E}^{16}$                   & 164.7 & 14.6 & 1.985 & 2.764 \\
& $\mathbb{S}^{16}$                   & 212.5 & 47.2 & 2.228 & 3.120 \\
& $\mathbb{H}^{16}$                   & 235.4 & 71.1 & 2.228 & 3.120 \\
& $\mathbb{H}^{8}\!\times\!\mathbb{E}^{8}$ & 214.6 & 75.9 & 2.228 & 3.120 \\
& $\mathbb{E}^{8}\!\times\!\mathbb{S}^{8}$ & 201.0 & 55.6 & 2.228 & 3.120 \\
& $\mathbb{H}^{8}\!\times\!\mathbb{S}^{8}$ & 244.0 & 80.1 & 2.485 & 3.497 \\
\bottomrule
\end{tabular}%
}
\end{table}

To provide a comprehensive view of our model's practical implications, we analyzed its computational cost against baseline models, with the results detailed in Table \ref{tab:computation_comparison}. All experiments were conducted on a server equipped with a single NVIDIA RTX 4090 GPU (24GB of VRAM), a 20-vCPU Intel Xeon Platinum 8470Q processor, and 90GB of system memory. The software environment was configured with Ubuntu 22.04, running PyTorch 2.5.1, Python 3.12, and CUDA 12.4. The results indicate that RLSTG, while more powerful, carries a higher computational overhead compared to the Euclidean-based STGODE. The increased cost of RLSTG is an expected trade-off, arising from the richer representations and the additional computations required for manifold operations, such as exponential maps and parallel transport. This increased cost is the price for the model's enhanced flexibility and superior performance on datasets with heterogeneous geometric structures, as demonstrated in our experimental results.

\subsection{Metrics Analysis}\label{appendix:Geometric_Features}
\begin{table}[h!]
\centering
\small
\caption{A comparison of metrics across different datasets, rounded to four decimal places.}
\label{tab:metrics_four_decimals}
\begin{tabular}{lcccc}
\toprule
\textbf{Metric} & \textbf{METR-LA} & \textbf{PEMS03} & \textbf{PEMS04} & \textbf{ENRON} \\
\midrule

Nodes &207 &358 &307 &184 \\
Edges &1515 &442 &209 &125235 \\
Node/Link Feature &1/1 &1/1 &1/1 &0/0\\
Duration &4 months &3 months &2 months & 3 years \\
Topology &Static &Static &Static &Dynamic \\

\midrule
$\delta_\text{mean}$ & 2.5000 & 1.0000 & 1.0000 & 1.0333 \\
$\delta_\text{variance}$ & 0.0000 & 0.0000 & 0.0000 & 0.1489 \\
$\delta_\text{cv}$ & 0.0000 & 0.0000 & 0.0000 & 0.3734 \\
\midrule
$\beta_{1, \text{mean}}$ & 1108.0000 & 58.0000 & 7.0000 & 15.7111 \\
$\beta_{1, \text{variance}}$ & 0.0000 & 0.0000 & 0.0000 & 65.4943 \\
$\beta_{1, \text{cv}}$ & 0.0000 & 0.0000 & 0.0000 & 0.5151 \\
\midrule
$TCC_\text{mean}$ & 0.5512 & 0.0000 & 0.0256 & 0.2081 \\
$TCC_\text{variance}$ & 0.0000 & 0.0000 & 0.0000 & 0.0053 \\
$TCC_\text{90th percentile}$ & 0.5512 & 0.0000 & 0.0256 & 0.3031 \\
\midrule
Structure Stability & 1.0000 & 1.0000 & 1.0000 & 0.1567 \\
\bottomrule
\end{tabular}
\end{table}

We report standard graph metrics and their temporal summaries in Table~\ref{tab:metrics_four_decimals}.
$\delta$-hyperbolicity follows the four-point condition (Gromov);
$\beta_1$ is the first Betti number on graphs ($\beta_1=|E|-|V|+|C|$);
the global clustering coefficient $C$ is $3N_\triangle/N_3$.
Temporal statistics (mean/variance and CV) are computed over snapshots.
\textbf{TCC}$(\Delta t)$ counts temporally ordered triangle closures within delay $\Delta t$.
\textbf{Zigzag-PH} uses 1D barcodes over snapshots; we report $L_{1,\max}=\max_i(d_i-b_i)$.

\end{document}